%% file: main.tex
\pdfoutput=1
\PassOptionsToPackage{shortlabels}{enumitem}
\documentclass[11pt, letterpaper, logo, onecolumn, copyright]{template_from_google}

\usepackage[T1]{fontenc}
\usepackage[utf8]{inputenc}
\usepackage{microtype}

\usepackage{amsmath}
\usepackage{amsfonts}
\usepackage{amssymb}
\usepackage{mathtools}
\usepackage{amsthm}
\usepackage{bm}
\usepackage{nicefrac}

\usepackage{booktabs}
\usepackage{enumitem}
\usepackage{fancyhdr}
\usepackage{multirow}
\usepackage{array}
\usepackage{wrapfig}
\usepackage{arydshln}
\usepackage{authblk}

\usepackage[ruled,vlined]{algorithm2e}

\usepackage{graphicx}
\usepackage{caption}
\usepackage{subcaption}
\usepackage{tikz}
\usetikzlibrary{shapes.geometric, arrows.meta, positioning, calc}

\usepackage[dvipsnames]{xcolor}

\usepackage[authoryear, sort&compress, round]{natbib}

\usepackage{hyperref}
\usepackage{url}
\definecolor{darkblue}{rgb}{0.0, 0.0, 0.6}
\definecolor{darkred}{rgb}{0.7, 0.0, 0.0}
\hypersetup{
  pdffitwindow=true,
  pdfstartview={FitH},
  pdfnewwindow=true,
  colorlinks,
  linktocpage=true,
  linkcolor=darkred,
  urlcolor=darkblue,
  citecolor=darkblue
}

\usepackage[export]{adjustbox}
\usepackage[capitalise]{cleveref}
\crefname{equation}{Eq.}{Eqs.}

\newcommand{\E}{\mathbb{E}}

\newcommand{\cfg}{\alpha_{\mathrm{cfg}}}
\newcommand{\diff}{\mathrm{d}}

\newtheorem{theorem}{Theorem}
\newtheorem{proposition}[theorem]{Proposition}

\newtheorem{example}[theorem]{Example}

\newtheorem{remark}[theorem]{Remark}

\makeatletter
\renewcommand\paragraph{\@startsection{paragraph}{4}{\z@}
            {-2.5ex\@plus -1ex \@minus -.25ex}
            {1.25ex \@plus .25ex}
            {\itshape\normalsize\bfseries}}
\makeatother

\let\cite\citep

\title{Diff-Instruct with Diffused Reward: Towards Principled One-step Generator RL}

\newcommand{\shorttitle}{Diff-Instruct with Diffused Reward}
\reportnumber{}
\renewcommand{\today}{}

\author[1]{Junyi Wu}
\author[2]{Weijian Luo}
\author[1]{Haoyang Zheng}
\author[1]{Ruizhe Zhang}
\author[1]{Guang Lin}
\affil[1]{Purdue University}
\affil[2]{hi-lab, Xiaohongshu Inc.}

\begin{abstract}
Recent advances in one-step text-to-image generation have enabled real-time synthesis with remarkable efficiency and quality.
Previous reinforcement learning methods for one-step generators combines the image space reward optimization and the diffusion noisy space distribution matching. This paradigm brings challenges due to a mismatch between terminal reward optimization and the underlying generative dynamics. As a result, optimization tends to exploit stochastic degrees of freedom, often improving reward at the expense of image fidelity. To address this issue, we propose \textbf{Diff-Instruct with Diffused Reward} (\textsc{Didr}), a data-free trajectory-level alignment framework derived from Integral KL minimization. \textsc{Didr} propagates the RLHF-optimal reward-tilted clean-image distribution across all noise levels along the diffusion trajectory. We show that this objective admits the same minimizer as clean-image RLHF, while naturally inducing \textbf{the Diffused Reward Score} (DRS), which acts as a reward-driven correction to the reference score function. To make this practical, we further introduce \textbf{the Diffused Reward Proxy} (DRP), an efficient estimator of DRS based on differentiable short-step denoising. Extensive experiments demonstrate that \textsc{Didr} consistently Pareto-dominates existing one-step SDXL baselines. Moreover, when transferred to a 6B DiT backbone (\textsc{Z-Image}), \textsc{Didr} surpasses its 50-step teacher in preference alignment while requiring only a single generation step. \href{https://github.com/Junyi-W/DIDR-code}{[Code]}.

\end{abstract}

\begin{document}
\maketitle

\input{intro}
\input{back}
\input{method}
\input{experiments}
\input{related}
\input{conclusion}

\bibliography{refs}

\newpage

\appendix

\tableofcontents

\input{derivation}

\newpage
\end{document}

%% file: intro.tex
\section{Introduction}

Deep generative models have achieved remarkable progress in text-to-image (T2I) synthesis~\citep{rombach2022high,saharia2022photorealistic,ramesh2022hierarchical,esser2024scaling,nichol2021improved,karras2020analyzing}, driving the development of one-step generators~\citep{Luo2023DiffInstructAU,yin2024improved,zhou2024long,sauer2023stylegan,kang2023gigagan,zheng2024diffusion,liu2023instaflow,xu2024ufogen} that map latent noise directly to images in a single forward pass via diffusion distillation~\citep{luo2023comprehensive} and GAN-based techniques~\citep{goodfellow2014generative,sauer2023adversarial,zheng2025ultra}. While these models achieve real-time synthesis, they frequently exhibit suboptimal aesthetic quality and poor alignment with human preferences---a central requirement for real-world deployment.

Diff-Instruct\citep{luo2023diffinstruct} opens the door to train one-step image generators by distribution matching distillation\citep{yin2023one} by minimizing an Integral Kullback-Leibler divergence. Later works based on concepts of distribution matching explored the reinforcement learning post-training of one-step text-to-image generators by combining image-space rewards with latent space distribution matching~\citep{anonymous2024diffinstruct,luo2024onestep}. However, these attempts lack a trajectory-level target directly induced by the KL-regularized RLHF optimum. These methods apply preference signals exclusively at the clean image output $x_0$ while imposing KL regularization over the full noisy trajectory. This creates a structural mismatch: the reward acts at the clean-image level while the regularizer spans all noise levels, so the combined objective does not correspond to any proper RLHF-induced target on the generator distribution. We refer to the resulting tendency to over-tilt toward high-reward outputs as \textbf{terminal reward domination}, and make it precise in a tractable example (\S\ref{subsec:trd}). In contrast, \textsc{Didr} derives a principled trajectory-level target directly from the KL-regularized RLHF optimum, ensuring that reward and regularization are balanced at every noise level.

To close this gap, we propose \textbf{Diff-Instruct with Diffused Reward} ({\textsc{Didr}}), a principled reinforcement learning framework for one-step image generators without using image data. Our core insight is to systematically propagate reward signals defined on clean images to every noise level of the diffusion trajectory. We realize this by starting from the KL-regularized RLHF objective, identifying its optimal clean-image target as the reward-tilted density $q^*(x_0|c) \propto q_0(x_0|c)\exp(r(x_0,c)/\tau)$, and diffusing this target through the reference process to obtain trajectory-level reward-tilted marginals $q_t^*(x_t|c)$. Minimizing the Integral KL (IKL) divergence against these marginals yields a target score that decomposes into the reference score plus a reward-induced correction, the \textbf{Diffused Reward Score (DRS)}, defined at every noise level through reference posterior.

\begin{figure}[!t]
\centering
\includegraphics[width=0.95\linewidth]{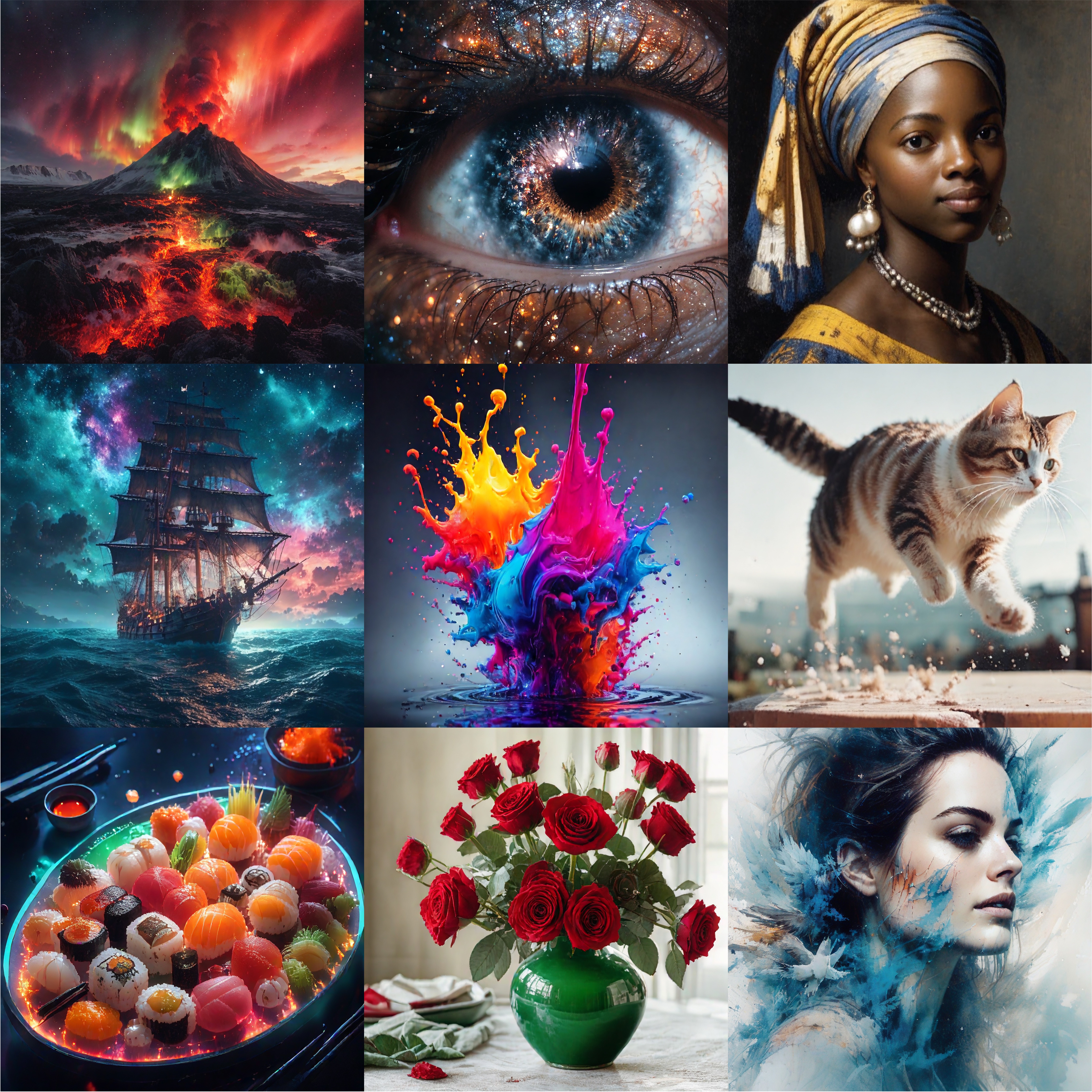}
\caption{$1024\times 1024$ images from one-step SDXL aligned via \textsc{Didr}. Prompts in Appendix~\ref{app:prompts}.}
\label{fig:teaser}
\end{figure}

Since the DRS involves an intractable posterior expectation, we further derive the \textbf{Diffused Reward Proxy (DRP)}: a differentiable estimator that runs short-step denoising chains from $x_t$ using the frozen reference model, propagating reward gradients stably back through the trajectory to yield a practical, data-free algorithm. Empirically, compared with Diff-Instruct*~\citep{luo2024onestep} (the strongest prior one-step SDXL baseline), \textsc{Didr}$_{\text{longer}}$ improves PickScore from $23.1$ to $23.9$ and ImageReward from $1.01$ to $1.10$ ($+8.9\%$); the standard \textsc{Didr} achieves the best PickScore--FID trade-off among one-step methods. We also scales \textsc{Didr} to the 6B Z-Image backbone, surpassing the 50-step base in a single step. We summarize our contributions as follows:
\begin{itemize}[leftmargin=6pt, noitemsep, topsep=2pt]

        \item We identify \emph{terminal reward domination} (a structural mismatch in prior endpoint-only methods where optimizers exploit noise to collapse onto high-reward regions) and propose \textsc{Didr}, a principled trajectory-level alignment framework. Grounded in our proof of Integral KL equivalence to KL-regularized RLHF, \textsc{Didr} rigorously propagates the RLHF-optimal, reward-tilted clean-image density forward to define a principled alignment target at \emph{every} noise level.

    \item We analytically derive {DRS}, which exactly decomposes the trajectory target score into a reference score plus a reward-induced correction. To make this tractable, we introduce {DRP}, a zero-data estimator that computes this correction via differentiable short-step posterior denoising through a frozen reference model, enabling stable reward-gradient propagation across the entire trajectory.

    \item Extensive experiments demonstrate that \textsc{Didr} sets a new state-of-the-art for one-step alignment without requiring any image training data. On one-step SDXL at $1024 \times 1024$, standard \textsc{Didr} Pareto-dominates all existing one-step baselines in the PickScore--FID trade-off, while \textsc{Didr}$_{\text{longer}}$ pushes PickScore to $23.9$ and ImageReward from $1.01$ to $1.10$ ($+8.9\%$). Furthermore, \textsc{Didr} successfully transfers to a 6B DiT-based Z-Image backbone, surpassing its 50-step teacher on preference metrics in a single step.
\end{itemize}

%% file: back.tex
\section{Preliminaries}\label{sec:background}

\textbf{Diffusion Trajectories and One-step Generators.} Let $x_0$ denote a clean image drawn from the reference data distribution $q_0(x_0|c)$, conditioned on a text prompt $c$. A diffusion model defines a forward noising process that gradually corrupts $x_0$ into pure noise over a continuous time variable $t \in [0, T]$. This is characterized by a transition kernel $q_t(x_t|x_0)$, which induces the noisy marginal distributions $q_t(x_t|c) = \int q_t(x_t|x_0)q_0(x_0|c)\diff x_0$. To reverse this process for image generation, one needs the Stein score of the marginals, $\nabla_{x_t}\log q_t(x_t|c)$, a vector field pointing toward higher data density. In practice, a neural network $s_{\mathrm{ref}}(x_t, t, c)$ is trained to approximate this score via Denoising Score Matching (DSM)~\citep{vincent2011connection,song2021scorebased}. In our framework, this trained multi-step diffusion model serves as the \emph{reference model} and is kept frozen.

While the reference model generates high-quality images, its iterative sampling process is computationally expensive. To achieve real-time synthesis, a one-step generator~\citep{goodfellow2014generative,song2023consistency} bypasses the iterative ODE/SDE integration entirely. It maps a standard Gaussian noise vector $z \sim p_z$ directly to a clean image in a single forward pass: $x_0 = g_\theta(z, c)$. This mapping induces an implicit clean-image distribution, denoted as $p_{\theta,0}(x_0|c)$. By hypothetically injecting the same forward noise into these generated samples, we can define the generator's noisy trajectory $p_{\theta,t}(x_t|c) = \int q_t(x_t|x_0)p_{\theta,0}(x_0|c)\diff x_0$. This distinction between the reference trajectory $q_t$ and the generator trajectory $p_{\theta,t}$ is crucial, as our objective relies on matching their reward-tilted dynamics.

\textbf{Trajectory Distillation and Terminal-reward Alignment.}
Standard one-step distillation, such as Diff-Instruct~\citep{Luo2023DiffInstructAU}, trains one-step generators by minimizing an Integral KL (IKL) divergence across the entire diffusion trajectory, anchoring the generator's noisy marginals to the bare reference marginals $\{q_t\}_{t\in[0,T]}$. To incorporate human preferences, recent methods like Diff-Instruct* (DI*,~\citep{luo2024onestep}) and Diff-Instruct++ (DI++,~\citep{anonymous2024diffinstruct}) extend this objective by naively appending a terminal reward. For a generic divergence $\mathcal{D}$, the objective becomes:
\begin{equation}
\mathcal{L}_{\mathrm{term}}(\theta)
= -\E_{c,\, x_0 \sim p_{\theta,0}(x_0|c)}[r(x_0,c)]
+ \tau\int_0^T w(t)\,\mathcal{D}\!\bigl(p_{\theta,t}(x_t|c)\|q_t(x_t|c)\bigr)\diff t.
\label{eq:lterm}
\end{equation}
Critically, this formulation evaluates the reward exclusively at the clean-image endpoint ($x_0$), while applying KL regularization against the unrewarded reference marginals ($\{q_t\}$) across all noise levels. This structural mismatch creates a fundamental optimization loophole: because forward noise naturally masks image details, the regularization penalty against mode divergence drastically weakens at high noise levels. Optimizers readily exploit this trajectory vulnerability, abandoning the reference distribution to collapse onto high-reward regions without incurring sufficient KL penalty. We formally identify this failure mode as \emph{terminal reward domination}. Resolving this inherent misalignment necessitates a principled trajectory-level target, which directly motivates our construction in Section~\ref{sec:method}.

\textbf{The Principled KL-Regularized Target.}
To construct a mathematically sound trajectory objective, we must first establish the correct alignment target at the clean-image level. Standard Reinforcement Learning from Human Feedback (RLHF)~\citep{christiano2017deep,ouyang2022training} formulates preference optimization by balancing reward maximization against a KL-divergence penalty to explicitly prevent fidelity degradation:
\begin{equation}
\mathcal{L}(\theta) = \E_{c,\, x_0 \sim p_\theta(x_0|c)} \bigl[-r(x_0, c)\bigr] + \tau\, \mathcal{D}_{\mathrm{KL}}\!\bigl(p_\theta(x_0|c) \| q_0(x_0|c)\bigr),
\label{eq:rlhf_kl}
\end{equation}
where the temperature $\tau>0$ governs the trade-off between semantic preference and reference fidelity. Crucially, this objective admits a unique, closed-form global optimum---the reward-tilted density:
\begin{equation}
q^*(x_0|c) = \frac{1}{Z(c)}\, q_0(x_0|c)\exp\!\left(\frac{r(x_0,c)}{\tau}\right),
\label{eq:reward_tilted_target}
\end{equation}
where $Z(c)$ is the normalizing partition function. As detailed in Appendix~\ref{app:proof_lem1}, minimizing Eq.~\eqref{eq:rlhf_kl} is mathematically equivalent to directly minimizing $\tau\mathcal{D}_{\mathrm{KL}}(p_\theta\|q^*)$. Thus, $q^*$ represents the fundamentally correct, perfectly balanced alignment target for clean images. Our core insight, developed next, is that this optimal target can be rigorously propagated forward through the diffusion process to guide every intermediate noise level without structural mismatch.

%% file: method.tex
\section{The Proposed Diff-Instruct with Diffused Reward}
\label{sec:method}

\textsc{Didr} resolves the target mismatch of prior methods through four steps.
(i)~We illustrate how the structural mismatch in $\mathcal{L}_{\mathrm{term}}$ can lead to over-tilting via a tractable example (\S\ref{subsec:trd}).
(ii)~We diffuse the RLHF-optimal clean-image target $q^*$ through the reference process to obtain principled trajectory-level marginals $\{q_t^*\}$, and show that minimizing the IKL against $\{q_t^*\}$ is equivalent to KL-regularized RLHF (\S\ref{subsec:target}).
(iii)~We derive the target score as the reference score plus a reward-induced correction (DRS) defined at every noise level (\S\ref{subsec:target}).
(iv)~We approximate DRS with DRP via differentiable short-step denoising, and train the generator using a Teaching Assistant (TA) score model (\S\ref{sec:drp}--\S\ref{subsec:algorithm}).

\subsection{Terminal Reward Domination}
\label{subsec:trd}
As noise increases, the forward diffusion process smooths modal structure, weakening the KL cost of suppressing an unrewarded mode. Because the reward acts only at the clean level while this weakening occurs across the full trajectory, the reward signal may dominate: the optimizer can over-tilt toward high-reward outputs relative to the RLHF target $q^*$. We refer to this behavior as \emph{terminal reward domination} and prove it analytically in the tractable example below.

\begin{example}[Terminal reward domination]
\label{prop:reward_domination}
Let $p_0=\tfrac{1}{2}\mathcal{N}(-\mu,\sigma^2)+\tfrac{1}{2}\mathcal{N}(\mu,\sigma^2)$ with reward $r(x_0)=\mathbf{1}[x_0>0]$. While $q^*$ remains a proper mixture for all finite $\tau$, $\mathcal{L}_{\mathrm{term}}$ fully collapses onto the rewarded mode whenever $\tau\le\gamma(1-\sigma^2)/[2\mu^2(-\log\sigma^2)]$, a threshold that need not be small (Appendix~\ref{app:reward_domination_gaussian}, Figure~\ref{fig:schematic_combined}).
\end{example}
This theoretical result shows that $q^*$ (Eq.~\eqref{eq:reward_tilted_target}) is a strictly better alignment target: it stays calibrated for all $\tau$ precisely because reward and KL regularization are balanced at the same level. Section~\ref{subsec:target} formalizes how to lift $q^*$ to a full trajectory objective.

\begin{figure}[!t]
\centering
\begin{subfigure}[t]{0.48\linewidth}
  \centering
  \includegraphics[width=\linewidth]{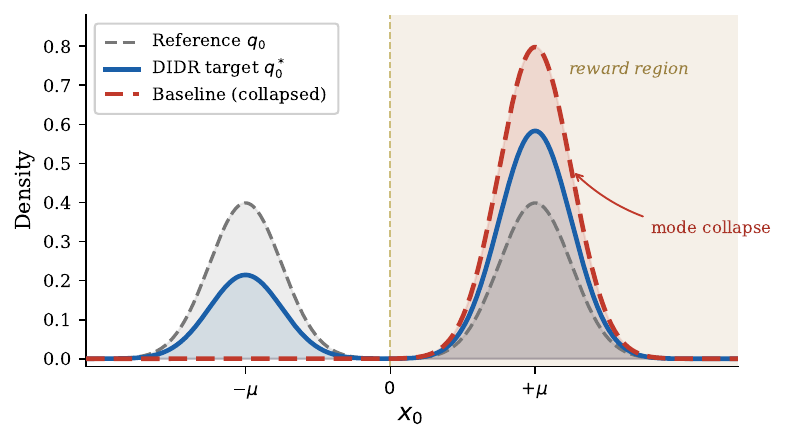}
  \caption{\textbf{Theoretical prediction} (\S\ref{subsec:trd}, $\tau=1$): $\mathcal{L}_{\mathrm{term}}$ collapses onto the rewarded mode (red) while $q^*$ remains a soft mixture (blue).}
\end{subfigure}
\hfill
\begin{subfigure}[t]{0.48\linewidth}
  \centering
  \includegraphics[width=\linewidth]{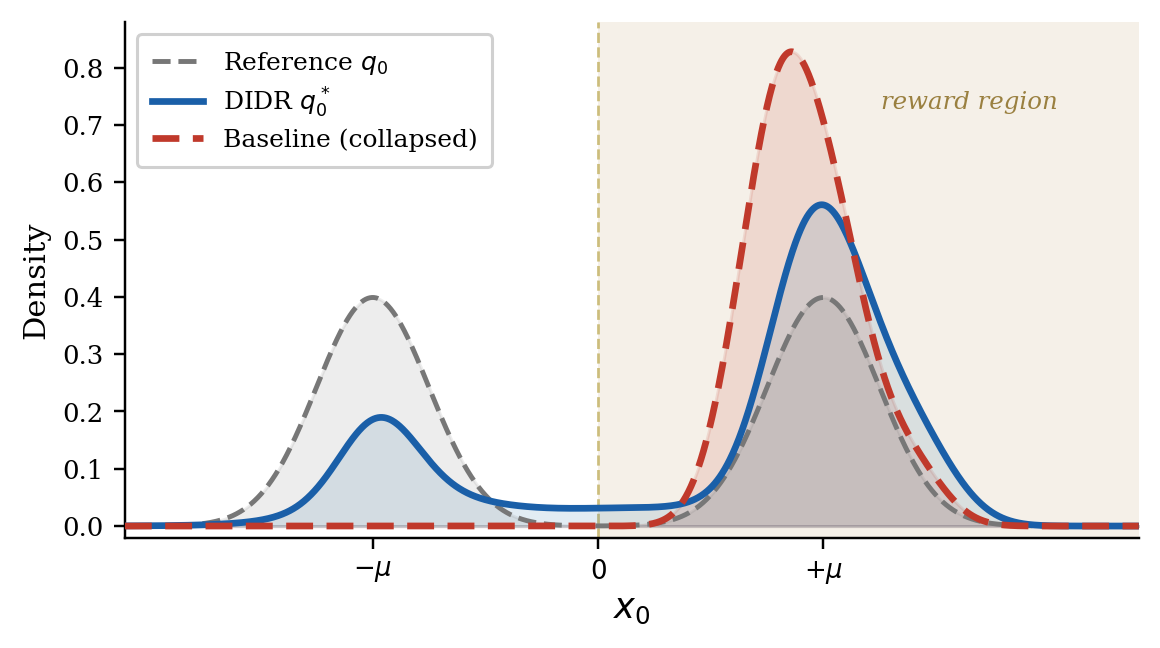}
  \caption{\textbf{1-D empirical validation} (setup in Appendix~\ref{app:toy_experiment}): \textsc{Didr} vs.\ DI++ as the baseline.}
\end{subfigure}
\vspace{-0.05in}
\caption{\textbf{Terminal reward domination.} The reward signal in $\mathcal{L}_{\mathrm{term}}$ tends to dominate the KL regularizer, causing the optimizer to over-tilt toward the high-reward mode relative to the balanced RLHF target $q^*$. \textbf{(a)} plots the theoretical prediction from \S\ref{subsec:trd}; \textbf{(b)} shows the 1-D empirical result (Appendix~\ref{app:toy_experiment}), confirming the collapse in practice even at $\tau=1$.}
\label{fig:schematic_combined}
\vspace{-0.1in}
\end{figure}

\subsection{Reward-Tilted Trajectory Objective}
\label{subsec:target}
Our core construction is the reward-tilted trajectory $\{q_t^*\}$, a principled optimization target defined at every noise level. We obtain it by lifting the RLHF optimum $q^*(x_0|c)$ (Eq.~\eqref{eq:reward_tilted_target}) to all noise levels via the reference forward process:
\begin{equation}
q_t^*(x_t|c)
=
\int q_t(x_t|x_0)\,q^*(x_0|c)\,\diff x_0 .
\label{eq:qt_star}
\end{equation}
The family $\{q_t^*\}_{t\in[0,T]}$ is the diffusion trajectory of the RLHF minimizer $q^*$, providing a well-defined target at every noise level. Proposition~\ref{thm:ikl_equiv} establishes the formal equivalence.

\begin{figure}[!t]
\vspace{-0.1in}
\centering
\begin{subfigure}[t]{0.48\linewidth}
  \centering
  \includegraphics[width=\linewidth]{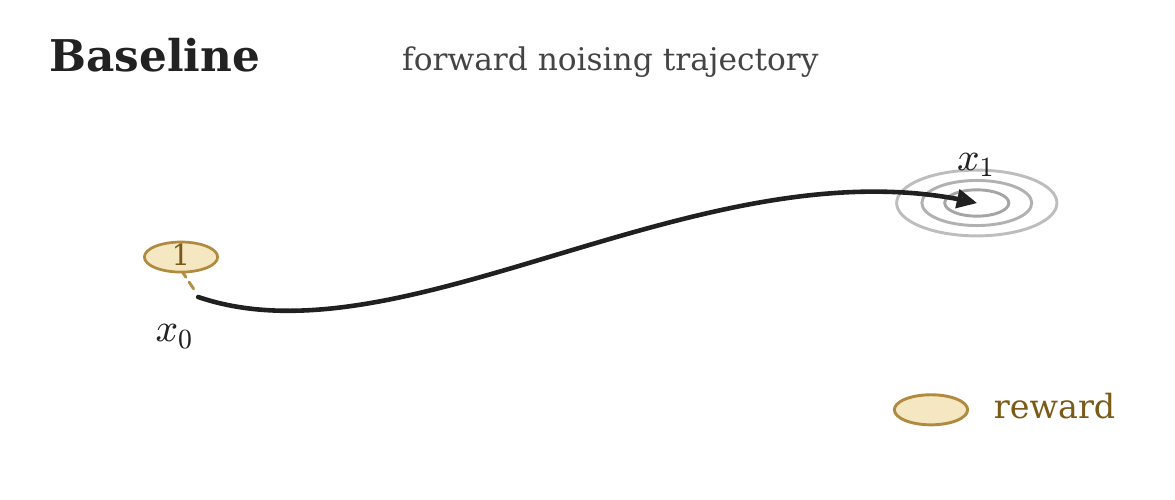}
  \caption{Terminal-reward alignment: reward only at $x_0$.}
\end{subfigure}
\hfill
\begin{subfigure}[t]{0.48\linewidth}
  \centering
  \includegraphics[width=\linewidth]{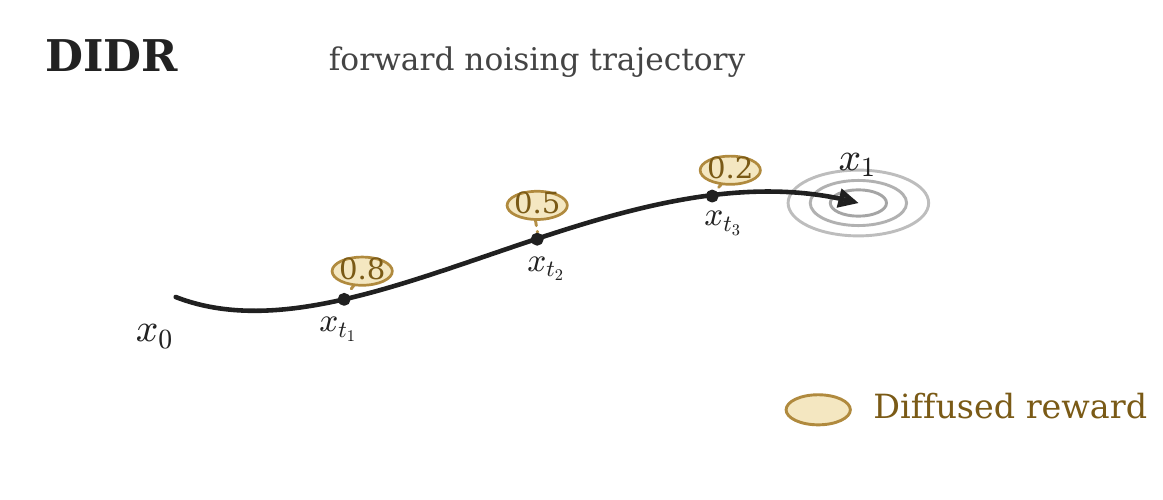}
  \caption{\textsc{Didr}: reward at every $x_t$ via DRS.}
\end{subfigure}
\caption{\textbf{Reward propagation comparison.} Prior methods inject reward only at the clean endpoint $x_0$ while regularizing against bare reference marginals. \textsc{Didr} propagates preference through the full trajectory via the Diffused Reward Score.}
\label{fig:reward_comparison}
\end{figure}

\textbf{Objective.}
We train the generator by minimizing the IKL divergence between its forward-noised marginals $p_{\theta,t}$ and the reward-tilted trajectory $\{q_t^*\}$:
\begin{equation}
\mathcal{L}_{\mathrm{DIDR}}(\theta)
=
\E_{c\sim\mathcal{C}}
\left[
\int_0^T w(t)\,
\mathcal{D}_{\mathrm{KL}}\!\bigl(p_{\theta,t}(\cdot|c)\|q_t^*(\cdot|c)\bigr)\diff t
\right].
\label{eq:ikl_obj}
\end{equation}
By minimizing IKL against $q_t^*$ rather than the bare reference marginals, \textsc{Didr} encodes the full RLHF signal inside the trajectory without a separate terminal reward term, inheriting the broad distribution support and stable gradient flow of the IKL formulation while correctly injecting reward signals at every noise level.

\begin{proposition}[IKL shares the RLHF minimizer]
\label{thm:ikl_equiv}
For any clean distribution $p_0(\cdot|c)$ and its forward marginals $p_t(\cdot|c)$, if $Z(c)<\infty$ and $w(t)>0$ a.e., then
\[
\operatorname*{arg\,min}_{p_0}\mathcal{L}_{\mathrm{RLHF}}(p_0|c)
=
\operatorname*{arg\,min}_{p_0}\mathcal{L}_{\mathrm{DIDR}}(p_0|c)
=
\{q^*(\cdot|c)\}.
\]
\end{proposition}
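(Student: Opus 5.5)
The plan is to show separately that each objective has $q^*(\cdot|c)$ as its unique minimizer, working with a fixed prompt $c$ throughout.

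\textbf{RLHF side.} This is the identity quoted from Appendix~\ref{app:proof_lem1}. Substitute $\log q_0 = \log q^* - r/\tau + \log Z(c)$ into Eq.~\eqref{eq:rlhf_kl}. This gives
\[
\mathcal{L}_{\mathrm{RLHF}}(p_0|c)=\tau\,\mathcal{D}_{\mathrm{KL}}\bigl(p_0\|q^*\bigr)-\tau\log Z(c).
\]
Here $Z(c)<\infty$ is exactly what makes $q^*$ a normalized density and the constant finite. Gibbs' inequality then shows that the minimizer is $q^*$ and that it is unique: KL vanishes iff the two measures coincide.

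\textbf{IKL side, $\ge$ direction.} Since $\mathcal{D}_{\mathrm{KL}}(p_t\|q_t^*)\ge 0$ for every $t$ and $w\ge0$, we have $\mathcal{L}_{\mathrm{DIDR}}\ge 0$. If $p_0=q^*$, then $p_t$ and $q_t^*$ are pushforwards of the same clean law through the same kernel $q_t(x_t|x_0)$ (Eq.~\eqref{eq:qt_star}). Hence $p_t=q_t^*$ for all $t$ and the objective equals $0$, so $q^*$ is a minimizer.

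\textbf{IKL side, uniqueness.} Suppose some $p_0$ also attains $\mathcal{L}_{\mathrm{DIDR}}=0$. Because $w(t)>0$ a.e., the integrand must vanish a.e., so $p_t=q_t^*$ for almost every $t\in[0,T]$, in particular for a set of $t$ accumulating at some point of $(0,T]$ (any positive-measure set has such a point). I then need injectivity of the forward noising map $p_0\mapsto p_t$. For the Gaussian kernel $x_t=\alpha_t x_0+\sigma_t\epsilon$, the characteristic functions satisfy
\[
\hat p_t(\omega)=\hat p_0(\alpha_t\omega)\,e^{-\sigma_t^2\|\omega\|^2/2}.
\]
The Gaussian factor never vanishes, so $\hat p_0(\alpha_t\,\cdot)=\hat q^*(\alpha_t\,\cdot)$ everywhere, provided $\alpha_t\neq0$. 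Thus a single such $t$ already forces $p_0=q^*$.

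I expect this deconvolution/injectivity step to be the main technical point, together with choosing a $t$ with $\alpha_t>0$ from the full-measure set (at $t=T$ one may have $\alpha_T=0$ or near it, but a.e.\ $t$ suffices). Measure-theoretic care is also needed so that KL is well defined: take $p_t\ll q_t^*$, which holds for $t>0$ since the Gaussian smoothing gives full support. Combining the two sides, both argmin sets equal $\{q^*(\cdot|c)\}$.
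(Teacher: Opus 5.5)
Your proof is correct. On the RLHF side, and in checking that $q^*$ attains $\mathcal{L}_{\mathrm{DIDR}}=0$, it matches the paper; the paper leaves the latter implicit. The real difference is the uniqueness step.

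The paper takes the full-measure set on which $p_t=q_t^*$ and extracts a sequence $t_n\downarrow 0$ from it. It then uses weak continuity of the forward marginals in $t$ to pass to the limit $p_0=q_0^*=q^*$. You instead invert the Gaussian convolution at a single noise level via characteristic functions: $\hat p_t(\omega)=\hat p_0(\alpha_t\omega)\,e^{-\sigma_t^2\|\omega\|^2/2}$, and the Gaussian factor never vanishes.

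Your route buys something. One $t$ with $\alpha_t\neq 0$ suffices, so you need no continuity at $t=0$. The argument would therefore survive a truncated weighting, with $w$ positive only on some $[t_{\min},T]$ and $t_{\min}>0$, as is common in practice. The paper's argument relies on the coincidence set accumulating at $t=0$, which is automatic under $w>0$ a.e. but fails for such truncations.

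In exchange, you use the affine-Gaussian form $x_t=\alpha_t x_0+\sigma_t\epsilon$ of the kernel. This covers both the VP/SDXL and rectified-flow/Z-Image settings in the paper, with $\alpha_t=1-t$ for the latter. The paper's limiting argument is kernel-agnostic: it needs only that the forward marginals converge weakly to $p_0$ as $t\downarrow 0$.

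Two minor points. Your remark about a set of $t$ accumulating at a point of $(0,T]$ is unnecessary for your own argument. The paper's convexity step (ii) plays no role in either proof.
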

(See Appendix~\ref{app:proof_thm_ikl_equiv} for proof.) The shared minimizer shows that the ideal IKL objective targets the RLHF optimum $q^*$. Note that this equivalence holds at the distribution level; the practical algorithm introduces score, posterior, and finite-sample approximations. Nevertheless, $\{q_t^*\}$ provides a well-defined optimization target throughout training.

\begin{theorem}[Score-based DIDR gradient]
\label{thm:score_gradient}
Differentiating the Integral KL through the generator and the forward noising process gives the ideal score-based gradient
\begin{equation}
\begin{aligned}
\nabla_\theta\mathcal{L}_{\mathrm{DIDR}}
=
\E_{\substack{c\sim\mathcal{C},\,z\sim p_z,\,t\sim\pi(t)\\
x_0=g_\theta(z,c)\\
x_t\sim q_t(\cdot|x_0)}}
\Bigg[
w(t)
\bigl(
s_\theta(x_t,t,c)
- s_{\mathrm{ref}}(x_t,t,c)
- s_r(x_t,t,c)
\bigr)
\frac{\partial x_t}{\partial\theta}
\Bigg].
\end{aligned}
\label{eq:ikl_grad}
\end{equation}
where $s_\theta(x_t,t,c)=\nabla_{x_t}\log p_{\theta,t}(x_t|c)$.
Also, $s_{\mathrm{ref}}(x_t,t,c)=\nabla_{x_t}\log q_t(x_t|c)$. The Diffused Reward Score (DRS) is
\begin{equation}
s_r(x_t,t,c)
=
\nabla_{x_t}\log
\E_{x_0\sim q(x_0|x_t,c)}
\left[
\exp\!\left(\frac{r(x_0,c)}{\tau}\right)
\right].
\label{eq:sr_definition}
\end{equation}
\end{theorem}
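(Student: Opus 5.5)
The proof has two parts: an algebraic decomposition of the target score $\nabla_{x_t}\log q_t^*$, followed by the standard Diff-Instruct gradient computation for an Integral KL with a fixed target.

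\textbf{Step 1: score decomposition.} Substituting Eq.~\eqref{eq:reward_tilted_target} into Eq.~\eqref{eq:qt_star} gives
\[
q_t^*(x_t|c)=\frac{1}{Z(c)}\int q_t(x_t|x_0)\,q_0(x_0|c)\,e^{r(x_0,c)/\tau}\,\diff x_0 .
\]
Write $q_t(x_t|x_0)q_0(x_0|c)=q_t(x_t|c)\,q(x_0|x_t,c)$ by Bayes' rule, where $q(x_0|x_t,c)$ is the reference posterior. This yields
\[
q_t^*(x_t|c)=\frac{q_t(x_t|c)}{Z(c)}\,\E_{x_0\sim q(x_0|x_t,c)}\bigl[e^{r(x_0,c)/\tau}\bigr].
\]
Take $\log$ and apply $\nabla_{x_t}$. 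The constant $Z(c)$ drops out, and we get $\nabla_{x_t}\log q_t^* = s_{\mathrm{ref}}+s_r$, with $s_r$ exactly as in Eq.~\eqref{eq:sr_definition}. This step needs three conditions:
\begin{itemize}
\item $Z(c)<\infty$, as assumed in Proposition~\ref{thm:ikl_equiv};
\item the posterior expectation is strictly positive, which holds automatically because the integrand is positive;
\item $q_t$ is differentiable in $x_t$, which holds for a Gaussian kernel.
\end{itemize}

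\textbf{Step 2: gradient of each KL term.} Fix $t$ and $c$. Write
\[
\mathcal{D}_{\mathrm{KL}}(p_{\theta,t}\|q_t^*)=\E_{x_t\sim p_{\theta,t}}\bigl[\log p_{\theta,t}(x_t)-\log q_t^*(x_t)\bigr].
\]
Reparametrize the sample as $x_t=\alpha_t g_\theta(z,c)+\sigma_t\epsilon$, with $z\sim p_z$ and $\epsilon$ independent Gaussian noise, so that the sampling distribution no longer depends on $\theta$. Differentiating under the expectation produces two kinds of terms.
\begin{itemize}
\item Path derivatives: $\bigl(\nabla_{x_t}\log p_{\theta,t}-\nabla_{x_t}\log q_t^*\bigr)\,\partial x_t/\partial\theta$. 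Since $q_t^*$ does not depend on $\theta$, there is no other contribution from the target.
\item The explicit parameter dependence of the density, $\E_{x_t\sim p_{\theta,t}}\bigl[\partial_\theta\log p_{\theta,t}(x_t)\bigr]$. This is the expected score with respect to $\theta$, and it vanishes because $\int\partial_\theta p_{\theta,t}\,\diff x_t=\partial_\theta 1=0$.
\end{itemize}
This score-function identity is the one place that needs care. I would justify swapping derivative and integral using smoothness and integrability of the Gaussian-convolved density $p_{\theta,t}$.

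\textbf{Step 3: assembling the result.} Substitute Step 1 into the path-derivative term. Then integrate against $w(t)$, rewriting $\int_0^T w(t)\,(\cdot)\,\diff t$ as an expectation over $t\sim\pi(t)$, with the normalization absorbed into $w$. Finally take the expectation over $c$. This gives Eq.~\eqref{eq:ikl_grad}.

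\textbf{Main obstacle.} I expect Step 2 to be the hardest part: rigorously interchanging $\nabla_\theta$ with the integrals and showing that the $\partial_\theta\log p_{\theta,t}$ contribution vanishes. For an implicit generator, $p_{\theta,t}$ is only defined through a convolution. I would handle this with dominated convergence, assuming $g_\theta$ is $C^1$ in $\theta$ with integrable derivatives. In that setting the Gaussian kernel makes $p_{\theta,t}$ smooth and positive for $t>0$.
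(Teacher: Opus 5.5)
Your proof is correct. Step~1 is exactly the paper's Bayes-factorization argument.

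Where you differ is the gradient of each KL term. You use the reparameterized sample: you differentiate $\E_{z,\epsilon}[\log p_{\theta,t}(x_t(\theta))-\log q_t^*(x_t(\theta))]$ by the chain rule and drop $\E[\partial_\theta\log p_{\theta,t}]=0$.

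The paper instead works with densities. It applies the product rule to $\int p_{\theta,t}\log(p_{\theta,t}/q_t^*)\,\diff x_t$ and removes the same score-function term. It then derives a continuity equation $\partial_\theta p_{\theta,t}=-\nabla_{x_t}\cdot(p_{\theta,t}v_\theta)$ from test functions. Integrating by parts, it reaches $\int p_{\theta,t}\,v_\theta\cdot\nabla_{x_t}\log(p_{\theta,t}/q_t^*)\,\diff x_t$.

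Your route is shorter and needs no integration by parts, so there are no boundary terms to control. It also avoids a point the paper passes over. The quantity $v_\theta=\partial x_t/\partial\theta$ is a function of the latent noise $(z,\mathbf{w})$, not of $x_t$. So the paper's step $\E_{z,\mathbf{w}}[\nabla\phi\cdot v_\theta]=\int p_{\theta,t}\nabla\phi\cdot v_\theta\,\diff x_t$, and hence its continuity equation, holds only with $v_\theta$ replaced by $\E[\partial x_t/\partial\theta\mid x_t]$. The final formula still survives by the tower property.

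What the paper's route buys is the transport picture. The generator affects the loss only through the velocity field it induces on the marginal $p_{\theta,t}$, and the gradient is the first variation of the KL along that field. That is the natural language if you want to read training as a gradient flow of the noisy marginals toward $\{q_t^*\}$.
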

(See Appendix~\ref{app:proof_thm2} for proof.) The DRS term $s_r$ adds a reward-driven correction to the reference score, steering the generator toward the RLHF optimum at every noise level. Because $s_r$ is defined through the posterior $q(x_0|x_t,c)$, it propagates preference information only where the clean image is identifiable from the noisy observation, naturally attenuating reward guidance at high noise. The generator update is therefore driven by the mismatch between its own score and the reward-tilted target score $s_{\mathrm{ref}}+s_r$.

\begin{remark}[Connection to classifier guidance]
\label{rem:classifier_guidance}
Classifier guidance~\citep{dhariwal2021diffusion} is the first-order approximation of the DRS; the DRS keeps the full log-exponential tilt.
\end{remark}

\subsection{Diffused Reward Proxy}
\label{sec:drp}
Although the DRS provides a principled reward signal at every noise level, computing Eq.~\eqref{eq:sr_definition} requires the posterior $q(x_0|x_t,c)$, which is unavailable in closed form for a learned diffusion model. Assuming posterior samples admit a reparameterization $x_0 = \mathcal{G}(x_t,\boldsymbol{\epsilon},c)$ differentiable in $x_t$, differentiating the log-normalizer yields the pathwise form
\begin{equation}
\begin{aligned}
s_r(x_t,t,c)
=
\E_{x_0\sim q(x_0|x_t,c)}
\left[
\frac{\exp\!\left(\frac{r(x_0,c)}{\tau}\right)}
{\E_{\tilde{x}_0\sim q(\tilde{x}_0|x_t,c)}
\left[\exp\!\left(\frac{r(\tilde{x}_0,c)}{\tau}\right)\right]}
\cdot
\frac{1}{\tau}\nabla_{x_t}r(x_0,c)
\right],
\end{aligned}
\label{eq:drs_pathwise}
\end{equation}
where $\nabla_{x_t}r(x_0,c)$ is the gradient through a posterior sample $x_0$ treated as a function of $x_t$: the DRS is a reward-softmax-weighted average of pathwise reward gradients. We make this tractable by approximating the unavailable posterior with differentiable short-step denoising chains from $x_t$ under the frozen reference model, $\hat{x}_0 = \mathcal{G}_{\mathrm{ref}}(x_t,\boldsymbol{\epsilon},c)$, giving DRP formalized below.

\begin{proposition}[Diffused Reward Proxy]
\label{prop:drp}
Let $\hat{x}_0^{(k)}$ be $K$ independent $S$-step denoising samples from the frozen reference model,
\begin{equation}
\hat{x}_0^{(k)}
=
\mathcal{G}_{\mathrm{ref}}(x_t,\boldsymbol{\epsilon}^{(k)},c),
\qquad
k=1,\ldots,K,
\qquad
\boldsymbol{\epsilon}^{(k)}\sim p(\boldsymbol{\epsilon}) .
\end{equation}
For fixed $\boldsymbol{\epsilon}$, each chain is differentiable with respect to $x_t$. The DRP approximates the unavailable posterior expectation in Eq.~\eqref{eq:drs_pathwise} by the Monte Carlo estimator
\begin{equation}
\widehat{s}_r(x_t,t,c)
=
\frac{1}{\tau}
\sum_{k=1}^K
\omega^{(k)}
\nabla_{x_t}r(\hat{x}_0^{(k)},c),
\qquad
\omega^{(k)}
=
\frac{\exp(r(\hat{x}_0^{(k)},c)/\tau)}
{\sum_j \exp(r(\hat{x}_0^{(j)},c)/\tau)} .
\label{eq:drp_estimator}
\end{equation}
Here the pathwise reward gradient is
\begin{equation}
\nabla_{x_t}r(\hat{x}_0^{(k)},c)
=
\left(
\frac{\partial \mathcal{G}_{\mathrm{ref}}(x_t,\boldsymbol{\epsilon}^{(k)},c)}
{\partial x_t}
\right)^{\!\top}
\nabla_{\hat{x}_0}r(\hat{x}_0^{(k)},c).
\end{equation}
\end{proposition}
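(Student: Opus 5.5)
The content of Proposition~\ref{prop:drp} is that Eq.~\eqref{eq:drp_estimator} is exactly the self-normalized Monte Carlo version of the pathwise identity Eq.~\eqref{eq:drs_pathwise}, with the true posterior sampler replaced by the reference denoising chain. I will prove it in three steps. First, I establish Eq.~\eqref{eq:drs_pathwise} from the definition Eq.~\eqref{eq:sr_definition}. Second, I substitute $\mathcal{G}_{\mathrm{ref}}$ and empirical averages. Third, I state consistency properties.

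\textbf{Step 1: the pathwise identity.} Assume $x_0=\mathcal{G}(x_t,\boldsymbol{\epsilon},c)$ with $\boldsymbol{\epsilon}\sim p(\boldsymbol{\epsilon})$ independent of $x_t$. Then
\[
N(x_t):=\E_{x_0\sim q(x_0|x_t,c)}\bigl[e^{r(x_0,c)/\tau}\bigr]=\E_{\boldsymbol{\epsilon}}\bigl[e^{r(\mathcal{G}(x_t,\boldsymbol{\epsilon},c),c)/\tau}\bigr].
\]
Because the sampling measure no longer depends on $x_t$, I differentiate under the integral. This is justified by dominated convergence, assuming $r$ and $\mathcal{G}$ are $C^1$ with an integrable envelope for $e^{r/\tau}\|\nabla_{x_t}r\|$ locally uniformly in $x_t$. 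The chain rule then gives $\nabla_{x_t}N=\E_{\boldsymbol{\epsilon}}[e^{r/\tau}\tau^{-1}(\partial\mathcal{G}/\partial x_t)^\top\nabla_{x_0}r]$. Dividing by $N$ yields $s_r=\nabla N/N$, which is Eq.~\eqref{eq:drs_pathwise}. The weight is the tilt $e^{r/\tau}/N$, and the integrand is the vector-Jacobian product that appears as the pathwise gradient in the statement.

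\textbf{Step 2: the estimator.} Replace $\mathcal{G}$ by the $S$-step chain $\mathcal{G}_{\mathrm{ref}}$. Each step of this chain is a composition of the smooth network $s_{\mathrm{ref}}$ with affine updates in $x_t$ and fixed injected noise. By induction over the $S$ steps, the map $x_t\mapsto\hat{x}_0^{(k)}$ is differentiable for fixed $\boldsymbol{\epsilon}^{(k)}$, and its Jacobian is the product of the per-step Jacobians. This gives the stated formula for $\nabla_{x_t}r(\hat{x}_0^{(k)},c)$. Next, approximate both the numerator and denominator expectations in Eq.~\eqref{eq:drs_pathwise} by empirical means over the same $K$ draws. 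The factors $1/K$ cancel, which produces the softmax weights $\omega^{(k)}$ and hence Eq.~\eqref{eq:drp_estimator}.

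\textbf{Step 3: consistency.} When $\mathcal{G}_{\mathrm{ref}}$ reparameterizes the true posterior, the strong law of large numbers applied separately to the numerator and the denominator, together with the continuous mapping theorem for the ratio, gives $\widehat{s}_r\to s_r$ almost surely as $K\to\infty$. The estimator is biased at finite $K$, as is standard for self-normalized estimators. Otherwise, the limit is the DRS computed under the chain-induced posterior approximation.

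\textbf{Main obstacle.} The main obstacle is Step 1: justifying the exchange of gradient and expectation, and confirming that a differentiable reparameterization of $q(x_0|x_t,c)$ exists. I plan to address this by assuming these regularity conditions, as the paper does, rather than proving them. The rest of the argument is bookkeeping.
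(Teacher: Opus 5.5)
Your proposal is correct and follows essentially the same route as the paper's proof. Both reparameterize the posterior so that $x_t$ enters only through the sampler, swap gradient and expectation by dominated convergence, apply the chain rule, divide by the normalizer to obtain Eq.~\eqref{eq:drs_pathwise}, and then substitute the differentiable reference chain $\mathcal{G}_{\mathrm{ref}}$. Your explicit self-normalized Monte Carlo step (the $1/K$ factors cancelling into $\omega^{(k)}$) and the consistency remark in Step~3 spell out what the paper only asserts in its final line. Separating an exact reparameterization $\mathcal{G}$ from the approximate chain $\mathcal{G}_{\mathrm{ref}}$ is also cleaner than the paper's proof, which uses $\mathcal{G}_{\mathrm{ref}}$ inside what it treats as the exact posterior expectation.
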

(See Appendix~\ref{app:proof_lem4} for derivation.) The softmax weights $\omega^{(k)}$ concentrate gradient mass on high-reward denoised images, while the pathwise Jacobian $\partial\mathcal{G}_{\mathrm{ref}}/\partial x_t$ propagates clean-image reward information back to $x_t$ through the frozen denoising chain without storing reference model activations.

\begin{remark}[Why the proxy works]
At low noise, the posterior $q(x_0|x_t,c)$ is concentrated, so short denoising chains yield accurate proxies. At high noise, $x_t$ carries little information about $x_0$, so the DRS correction is less sensitive to the accuracy of individual posterior samples; a short denoising proxy captures the leading-order correction.
\end{remark}

\vspace{-3mm}
\subsection{Practical Training with a Teaching Assistant}
\label{subsec:algorithm}
\vspace{-3mm}
Two substitutions make Eq.~\eqref{eq:ikl_grad} implementable. First, the generator score $s_\theta = \nabla_{x_t}\log p_{\theta,t}$ is unavailable for an implicit one-step generator; following~\citet{Luo2023DiffInstructAU}, we surrogate it with a Teaching Assistant (TA) score model $s_\psi$ trained concurrently via denoising score matching (DSM)~\citep{vincent2011connection,song2021scorebased} on noisy generator samples, so that $s_\psi(x_t,t,c)\approx s_\theta(x_t,t,c)$. Second, for text-to-image generation we replace the reference score $s_{\mathrm{ref}}$ with the CFG-corrected score
\begin{equation}
\tilde{s}_{\mathrm{ref}}(x_t, t, c) = s_{\mathrm{ref}}(x_t, t, \emptyset) + \cfg \bigl[ s_{\mathrm{ref}}(x_t, t, c) - s_{\mathrm{ref}}(x_t, t, \emptyset) \bigr],
\label{eq:cfg}
\end{equation}
where $\cfg\geq 1$ amplifies the prompt-conditional component.

\begin{remark}[Implicit and explicit reward signals]
The CFG-corrected reference score $\tilde{s}_{\mathrm{ref}}$ can be viewed as an implicit reward-tilted score induced by text guidance, while the DRS $s_r$ is an explicit correction from the chosen reward model. These two signals are complementary and can be combined, giving \textsc{Didr} flexibility in the reward design.
\end{remark}

With these substitutions, the practical generator update is
\begin{equation}
\operatorname{Grad}(\theta)
=
\E_{\substack{c\sim\mathcal{C},\,z\sim p_z,\,t\\
x_0=g_\theta(z,c),\,x_t\sim q_t(\cdot|x_0)}}
\left[
w(t)
\left(
s_\psi(x_t,t,c)
-
\tilde{s}_{\mathrm{ref}}(x_t,t,c)
-
s_r(x_t,t,c)
\right)
\frac{\partial x_t}{\partial\theta}
\right].
\label{eq:grad_practical}
\end{equation}

\textsc{Didr} alternates two stages (Algorithm~\ref{alg:didr}, Figure~\ref{fig:workflow}). \textbf{Stage I} keeps $s_\psi$ synchronized with the evolving $g_\theta$: sample $x_0 = g_\theta(z,c)$, diffuse to $x_t$, and update $s_\psi$ via DSM. \textbf{Stage II} updates $g_\theta$: draw $K$ differentiable $S$-step denoising chains from $x_t$, compute the DRP weights, and backpropagate Eq.~\eqref{eq:grad_practical} to $\theta$. Both the reference diffusion model and the reward model remain frozen throughout.

\vspace{-3mm}

%% file: experiments.tex
\section{Empirical Results}
\vspace{-3mm}

\textbf{Models.}
We evaluate \textsc{Didr} on two backbones. For primary experiments, we apply \textsc{Didr} to one-step SDXL~\citep{podell2023sdxl} at $1024\times1024$ resolution, initializing the generator from the DMD2-SDXL-1step checkpoint~\citep{yin2024improved} and using the pretrained SDXL diffusion model as both the frozen reference $s_{\mathrm{ref}}$ and the TA initialization. We report a standard variant (\textsc{Didr}) and an extended-training variant (\textsc{Didr}$_{\text{longer}}$). To assess generalizability, we additionally apply \textsc{Didr} to the Z-Image backbone~\citep{zimage} (\textsc{zimage-Didr}), initializing from Z-Image-Turbo. Full hyperparameters are in Appendix~\ref{app:implementation}.

\textbf{Training Data and Reward.}
All models are trained on text prompts from LAION-Aesthetic-6.25+~\citep{zhou2024long}; no image data is required. As the reward model $r(x_0,c)$, we use PickScore~\citep{pickscore}, a widely accepted human preference model trained on large-scale pairwise human annotations that scores how well a generated image matches human preference given a text prompt. The training reward uses the raw unscaled PickScore, which is $1/98.86$ of the official reported values in Table~\ref{tab:main}, making $\tau{=}0.01$ a reasonable temperature for the DRP softmax.

\textbf{Evaluation.}
We report three categories of metrics: \textit{Preference} --- PickScore~\citep{pickscore}, ImageReward~\citep{xu2023imagereward}, HPSv2.1~\citep{hpsv2}, and Aesthetic Score~\citep{laionaes}; \textit{Text alignment} --- CLIPScore~\citep{hessel2021clipscore,radford2021learning}, DPG-Bench~\citep{dpgbench}, and GenEval~\citep{geneval}; and \textit{Fidelity} --- FID~\citep{heusel2017gans}. PickScore, ImageReward, Aesthetic Score, and CLIPScore are evaluated on $1\text{k}$ prompts from the MSCOCO-2017 validation set~\citep{lin2014microsoft}; FID on the full $30\text{k}$ validation images; HPSv2.1, DPG-Bench, and GenEval each use their own standard benchmark prompts. Since PickScore is also our training reward, the remaining preference metrics serve as independent out-of-domain evaluations. Further details are provided in Appendix~\ref{app:evaluation_metrics}.

\vspace{-3mm}
\subsection{Quantitative Results}

\textbf{\textsc{Didr}$_{\text{longer}}$ achieves the highest PickScore among evaluated one-step text-to-image models.}
As shown in Table~\ref{tab:main}, \textsc{Didr}$_{\text{longer}}$ achieves the highest PickScore ($23.9$) and surpasses all multi-step reference models on all four preference metrics under our evaluation protocol, using only 2.6B parameters and a single inference step. Gains span both the training reward and independent out-of-domain metrics (ImageReward $+8.9\%$, HPSv2.1 $+0.6$, Aesthetic Score $+0.15$ over DI*), suggesting that alignment does not simply overfit to PickScore. Per-category GenEval and HPSv2.1 results are provided in Tables~\ref{tab:geneval} and~\ref{tab:hpsv2} (Appendix~\ref{app:geneval}).

\textbf{Standard \textsc{Didr} Pareto-dominates all one-step alignment baselines on preference vs.\ fidelity.}
\begin{figure}[!t]
\centering
\begin{subfigure}[b]{0.48\linewidth}
  \centering
  \includegraphics[width=0.85\linewidth]{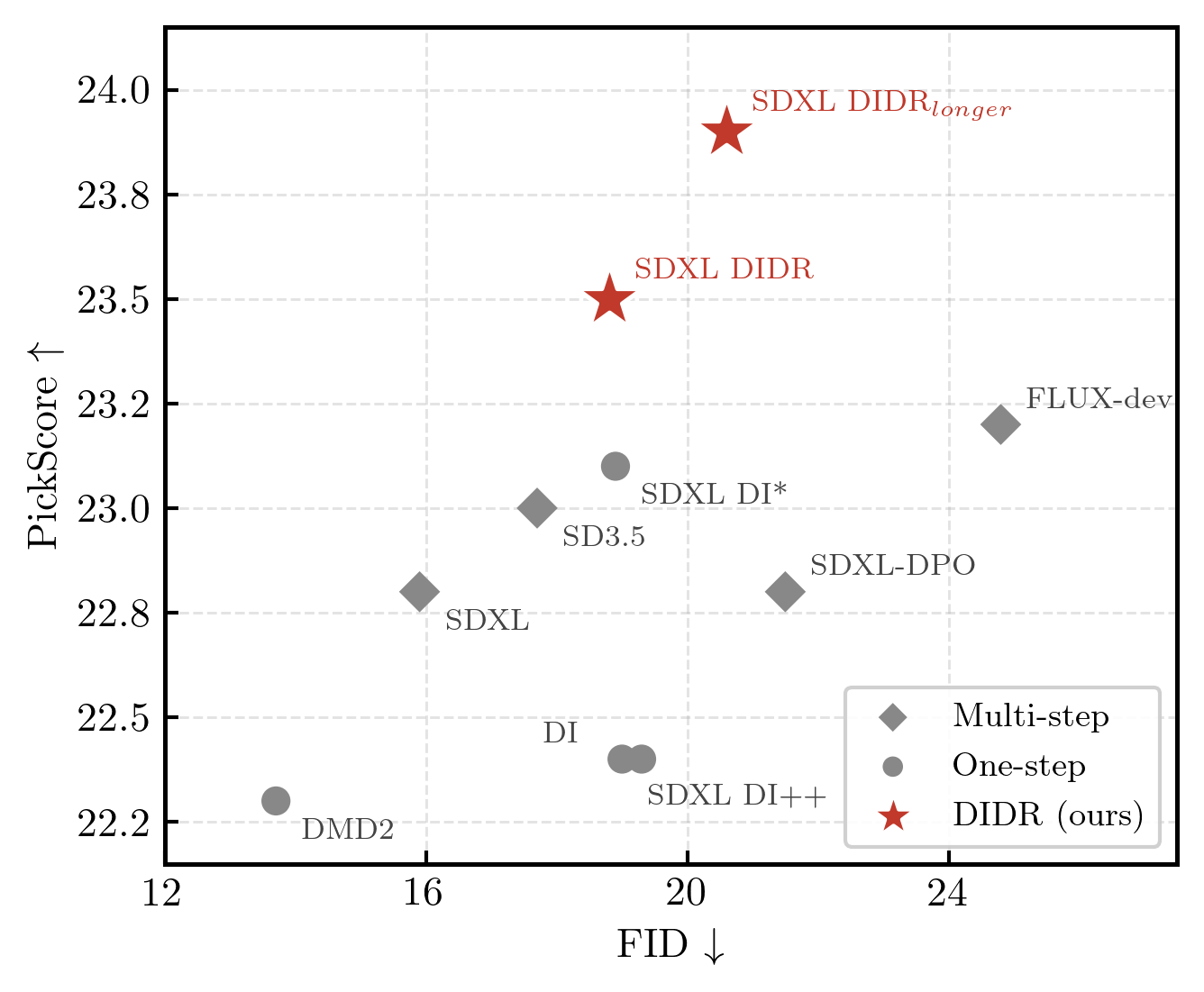}
  \phantomcaption
  \label{fig:pareto}
\end{subfigure}
\hfill
\begin{subfigure}[b]{0.48\linewidth}
  \centering
  \includegraphics[width=0.85\linewidth]{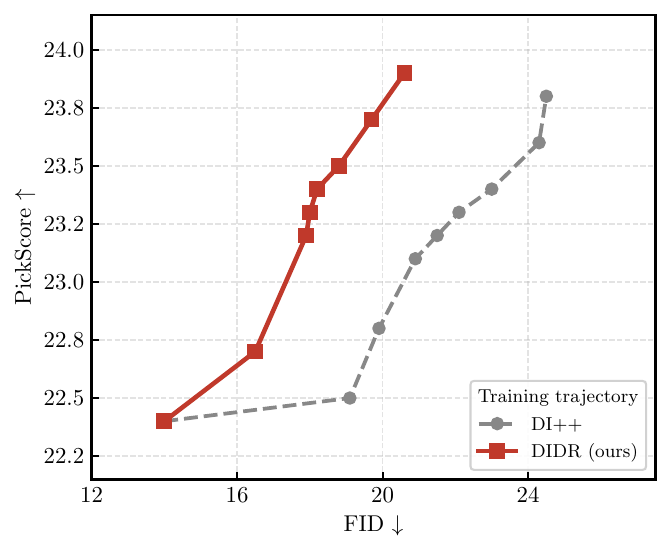}
  \phantomcaption
  \label{fig:trajectory}
\end{subfigure}
\caption{\textbf{PickScore--FID analysis}. \textit{Left}: final model comparison on MSCOCO-2017; \textsc{Didr} Pareto-dominates all one-step alignment baselines. \textit{Right}: training trajectories, \textsc{Didr} vs.\ Diff-Instruct++.}
\label{fig:pickscore_fid}
\vspace{-0.1in}
\end{figure}
Since the target distribution is the reward-tilted $q^*$ (Eq.~\eqref{eq:reward_tilted_target}), some FID increase is theoretically expected; the key question is which method best navigates this trade-off. As shown in Figure~\ref{fig:pareto}, \textsc{Didr} (PickScore $23.5$, FID $18.8$) Pareto-dominates all one-step alignment baselines and simultaneously surpasses FLUX-dev and SDXL-DPO on both axes. \textsc{Didr} also maintains text alignment, achieving the highest DPG-Bench ($75.06$) and GenEval ($0.579$) among one-step SDXL methods, with only a marginal CLIPScore dip, which is expected since text-alignment metrics are not directly optimized by the reward model.

\textbf{\textsc{Didr} generalizes across architectures and model scales.}
\textsc{Didr} generalizes to DiT architectures at larger scale: applied to the 6B-parameter Z-Image backbone, \textsc{zimage-Didr} surpasses the 50-step Z-Image base on all preference metrics in a single step. Against the 8-step Z-Image-Turbo, it leads on ImageReward, Aesthetic Score, CLIPScore, and DPG-Bench with lower FID, while trailing on PickScore, HPSv2.1, and GenEval. These gains are driven by training, not initialization---the 1-step Z-Image-Turbo initially scores ImageReward $0.39$ and FID $36.7$, vs $1.08$ and $22.1$ for \textsc{zimage-Didr}.

\input{tabs/main_results}

\vspace{-3mm}
\subsection{Qualitative Comparison}
\vspace{-3mm}

\begin{figure}[!t]
\centering
\includegraphics[width=0.571\textwidth]{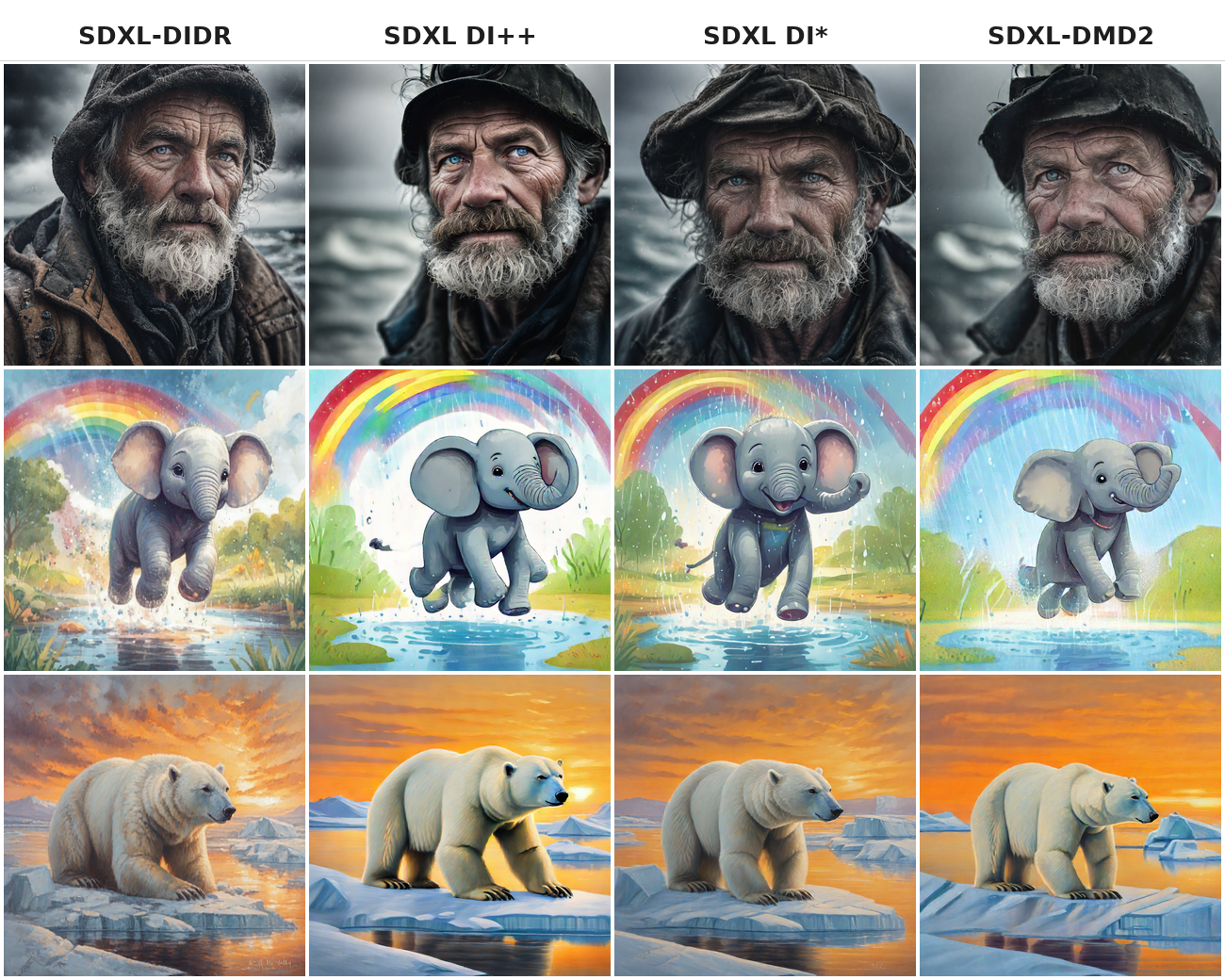}
\hfill
\includegraphics[width=0.428\textwidth]{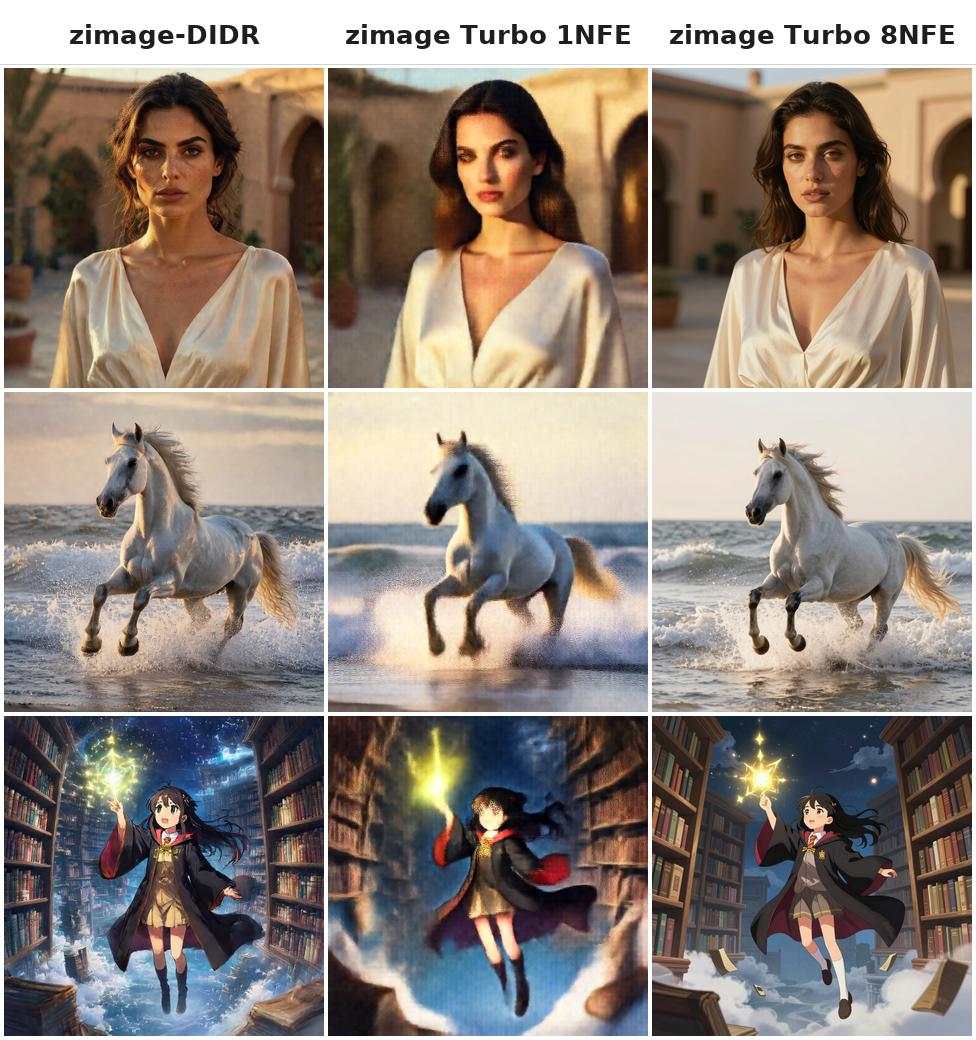}
\caption{\textbf{Qualitative comparison at $1024\times1024$.} \textit{Left}: SDXL backbone (SDXL-DIDR, DI++, DI*, DMD2). \textit{Right}: Z-Image backbone (zimage-DIDR, Turbo 1NFE, Turbo 8NFE). Rows: backbone-specific prompts listed in Appendix~\ref{app:prompts}.}
\label{fig:qualitative}
\vspace{-0.1in}
\end{figure}

\textsc{Didr} consistently improves visual quality over prior one-step methods while preserving perceptual naturalness (Figures~\ref{fig:teaser} and~\ref{fig:qualitative}). Compared with DI++ and DI*, \textsc{Didr} produces sharper fine detail and more natural color---skin tones and textures are faithfully rendered rather than over-saturated or painterly, reflecting a more favorable reward--fidelity trade-off. On the Z-Image backbone, \textsc{zimage-Didr} recovers dramatically from the severely degraded 1-step Turbo initialization and matches 8-step Z-Image-Turbo in perceptual sharpness and composition with a single step. A qualitative comparison against multi-step baselines (SDXL, Z-Image, FLUX-dev, SD3.5-Large) is provided in Figure~\ref{fig:onestep_vs_multistep}.

\vspace{-3mm}
\subsection{Ablation Studies}
\label{sec:ablation}
\vspace{-3mm}

\textbf{Effect of DRS.}
Figure~\ref{fig:trajectory} shows that \textsc{Didr} Pareto-dominates DI++ at every training checkpoint, confirming that trajectory-level reward propagation provides a structural advantage over terminal reward alone.

\textbf{Effect of $(K,S)$ and temperature $\tau$.}
Both $K$ and $S$ contribute independently: raising either from 1 to 4 improves all metrics, and $(K,S){=}(4,4)$ achieves the best result on all four metrics (Table~\ref{tab:ablation_ks}). Decreasing $\tau$ monotonically improves PickScore ($22.6{\to}23.8$) at the cost of FID ($16.3{\to}22.2$); we select $\tau{=}0.01$ as the operating point balancing both (Table~\ref{tab:ablation_tau}, Figure~\ref{fig:tau_ablation}).

\input{tabs/ablation}

\vspace{-3mm}

%% file: tabs/main_results.tex
\begin{table*}[!t]
\centering
\caption{Comparison of \textsc{Didr} against baselines at $1024\times1024$ resolution. Multi-step models are shown as reference only (no bold). \textbf{Bold}: best result within each one-step group. $\uparrow$/$\downarrow$: higher/lower is better. $^\dagger$: Z-Image-Turbo at 1~NFE (unofficial; \textsc{Didr} training initialization only).}
\label{tab:main}
\resizebox{\textwidth}{!}{\begin{tabular}{lclccccccccc}
\toprule
& & & & \multicolumn{4}{c}{\textbf{Preference} $\uparrow$} & \multicolumn{3}{c}{\textbf{Text Alignment} $\uparrow$} & \\
\cmidrule(lr){5-8} \cmidrule(lr){9-11}
\textbf{Model} & \textbf{Steps} & \textbf{Arch.} & \textbf{Params} & \textbf{PickS.} & \textbf{ImgR.} & \textbf{HPSv2.1} & \textbf{Aesth.} & \textbf{CLIP} & \textbf{DPG} & \textbf{GenEval} & \textbf{FID} $\downarrow$ \\
\midrule
\multicolumn{12}{l}{\textit{Multi-step reference}} \\
SDXL~\citep{podell2023sdxl}            & 50 & UNet & 2.6B & 22.8 & 0.82 & 28.98 & 5.45 & 33.86 & 74.42 & 0.551 & 15.9 \\
SDXL-DPO~\citep{wallace2024diffusion}  & 50 & UNet & 2.6B & 22.8 & 0.92 & 30.45 & 5.61 & 33.97 & 74.24 & 0.549 & 21.5 \\
SD3.5-large~\citep{sd35}               & 28 & DiT  & 8B   & 23.0 & 1.01 & 30.06 & 5.42 & 33.85 & 84.88 & 0.717 & 17.7 \\
FLUX-dev~\citep{fluxdev}               & 50 & DiT  & 12B  & 23.2 & 1.05 & 30.65 & 5.54 & 32.88 & 83.86 & 0.672 & 24.8 \\
Z-Image~\citep{zimage}                 & 50 & DiT  & 6B   & 22.5 & 0.99 & 30.57 & 5.43 & 33.15 & 86.30 & 0.840 & 14.3 \\
\midrule
\multicolumn{12}{l}{\textit{One-step SDXL}} \\
SDXL DMD2~\citep{yin2024improved}                  & 1 & UNet & 2.6B & 22.3 & 0.83 & 29.99 & 5.47 & 33.31 & 74.01 & 0.557 & \textbf{13.7} \\
SDXL Diff-Instruct~\citep{Luo2023DiffInstructAU}   & 1 & UNet & 2.6B & 22.4 & 0.92 & 31.51 & 5.53 & 33.35 & 74.98 & 0.560 & 19.3 \\
SDXL Diff-Instruct++~\citep{anonymous2024diffinstruct} & 1 & UNet & 2.6B & 22.4 & 0.92 & 31.44 & 5.58 & \textbf{33.37} & 75.01 & 0.560 & 19.0 \\
SDXL Diff-Instruct*~\citep{luo2024onestep}          & 1 & UNet & 2.6B & 23.1 & 1.01 & 33.29 & 5.68 & 33.09 & 74.97 & 0.555 & 18.9 \\
\textbf{SDXL \textsc{Didr} (Ours)}                  & 1 & UNet & 2.6B & 23.5 & 1.04 & 33.77 & 5.82 & 33.03 & \textbf{75.06} & \textbf{0.579} & 18.8 \\
\textbf{SDXL \textsc{Didr}$_{\text{longer}}$ (Ours)} & 1 & UNet & 2.6B & \textbf{23.9} & \textbf{1.10} & \textbf{33.89} & \textbf{5.83} & 32.97 & 74.99 & 0.566 & 20.6 \\
\midrule
\multicolumn{12}{l}{\textit{Z-Image backbone}} \\
Z-Image-Turbo~\citep{zimage}                        & 8 & DiT & 6B & \textbf{23.0} & 1.01 & \textbf{31.86} & 5.39 & 33.08 & 85.00 & \textbf{0.750} & 25.2 \\
\textbf{Z-Image \textsc{Didr} (Ours)}               & 1 & DiT & 6B & 22.6 & \textbf{1.08} & 31.40 & \textbf{5.46} & \textbf{33.35} & \textbf{86.63} & 0.692 & \textbf{22.1} \\
\hdashline
Z-Image-Turbo$^\dagger$~\citep{zimage}              & 1 & DiT & 6B & 21.0 & 0.39 & 24.54 & 4.68 & 33.28 & 77.05 & 0.572 & 36.7 \\
\bottomrule
\end{tabular}}
\vspace{-0.15in}
\end{table*}

%% file: tabs/ablation.tex
\begin{table}[h]

\centering
\small
\begin{minipage}[t]{0.44\linewidth}
\centering
\setlength{\tabcolsep}{4pt}
\caption{\textbf{Effect of $(K,S)$.} Both axes contribute independently; $(K,S){=}(4,4)$ is best. $^\dagger$: default; \textbf{bold}: best in column.}
\begin{tabular}{cccccc}
\toprule
$K$ & $S$ & \textbf{PickS.} $\uparrow$ & \textbf{ImgR.} $\uparrow$ & \textbf{Aesth.} $\uparrow$ & \textbf{FID} $\downarrow$ \\
\midrule
1          & 1          & 22.6 & 0.80 & 5.32 & 24.1 \\
1          & 2          & 22.8 & 0.86 & 5.41 & 22.4 \\
1          & 4          & 23.1 & 0.92 & 5.66 & 20.8 \\
2          & 1          & 22.7 & 0.84 & 5.44 & 23.5 \\
4          & 1          & 22.8 & 0.90 & 5.50 & 22.0 \\
$4^\dagger$ & $4^\dagger$ & \textbf{23.5} & \textbf{1.04} & \textbf{5.82} & \textbf{18.8} \\
\bottomrule
\end{tabular}
\label{tab:ablation_ks}
\end{minipage}\ \
\hfill
\begin{minipage}[t]{0.53\linewidth}
\centering
\setlength{\tabcolsep}{5pt}
\caption{\textbf{Effect of $\tau$.} Lower $\tau$ improves preference at the cost of FID; $\tau{=}0.01$ is the operating point. $(K,S){=}(4,4)$ fixed. $^\dagger$: default; \textbf{bold}: best in column.}
\begin{tabular}{lcccc}
\toprule
$\tau$ & \textbf{PickS.} $\uparrow$ & \textbf{ImgR.} $\uparrow$ & \textbf{Aesth.} $\uparrow$ & \textbf{FID} $\downarrow$ \\
\midrule
$1.00$             & 22.6          & 0.88          & 5.56          & \textbf{16.3} \\
$0.50$             & 22.9          & 0.93          & 5.41          & 17.1 \\
$0.10$             & 22.9          & 1.02          & 5.68          & 17.5 \\
$0.05$             & 23.2          & 1.01          & 5.74          & 17.8 \\
$0.01^\dagger$     & 23.5          & 1.04          & 5.82          & 18.8 \\
$0.001$            & \textbf{23.8} & \textbf{1.17} & \textbf{5.83} & 22.2 \\
\bottomrule
\end{tabular}
\label{tab:ablation_tau}
\end{minipage}

\end{table}

%% file: related.tex
\section{Related Work}
\vspace{-3mm}
\textbf{RLHF and Preference Alignment for Diffusion Models.}
Preference alignment for multi-step diffusion models uses trajectory-accessible signals via supervised fine-tuning~\citep{dai2023emu,podell2023sdxl}, reward backpropagation~\citep{prabhudesai2023aligning,clark2023directly,black2023training,fan2024reinforcement}, or offline preference optimization~\citep{wallace2024diffusion,yang2024using,hong2024margin}. These methods rely on explicit denoising trajectories and do not directly address one-step generators with implicit distributions; \textsc{Didr} derives a compatible trajectory objective that applies to implicit generators without requiring trajectory access during inference.

\textbf{Classifier Guidance and Reward-Guided Sampling.}
Classifier guidance~\citep{dhariwal2021diffusion} adds $\nabla_{x_t}\log p(c|x_t)$ to the score, which is a first-order approximation of the DRS (Remark~\ref{rem:classifier_guidance}), and related inference-time methods~\citep{chung2022diffusion,prabhudesai2023aligning,clark2023directly,kong2026ultra} apply differentiable rewards along the trajectory. However, all such methods modify only the sampling procedure and leave the generator weights unchanged; \textsc{Didr} instead trains an aligned one-step generator.

\vspace{-3mm}

%% file: conclusion.tex
\section{Conclusion and Limitations}
\vspace{-3mm}

\textbf{Conclusion.} We identified terminal reward domination as a fundamental failure mode of trajectory distillation with terminal rewards, and proposed \textsc{Didr} to resolve it. The core idea is to lift the RLHF-optimal clean-image target $q^*$ to a principled trajectory objective via the Diffused Reward Score, and approximate it tractably with the Diffused Reward Proxy through differentiable short-step denoising---requiring no image training data. \textsc{Didr} achieves the best PickScore--FID trade-off among one-step alignment methods, improves all four preference metrics over the strongest prior baseline, and transfers to the 6B Z-Image backbone, surpassing its 50-step teacher on preference metrics in a single step. These results indicate that principled trajectory-level reward propagation is an effective alignment strategy across architectures and scales.

\textbf{Limitations.} Since DRP propagates gradients through the reward model, it may amplify reward-model biases or trigger reward hacking when the reward model is imperfect. Each generator update also requires $K \times S$ differentiable reference-denoising steps, increasing training cost relative to endpoint-only methods. Finally, counting, fine-grained structure, and complex multi-object scenes remain open challenges for single-step generation (Figure~\ref{fig:failure}). Extending \textsc{Didr} to multi-step generators and developing reward-ensemble strategies to mitigate hacking are natural directions for future work.

%% file: derivation.tex
\section{Method Supplements}
\begin{figure}[t]
\centering
\includegraphics[width=0.95\linewidth]{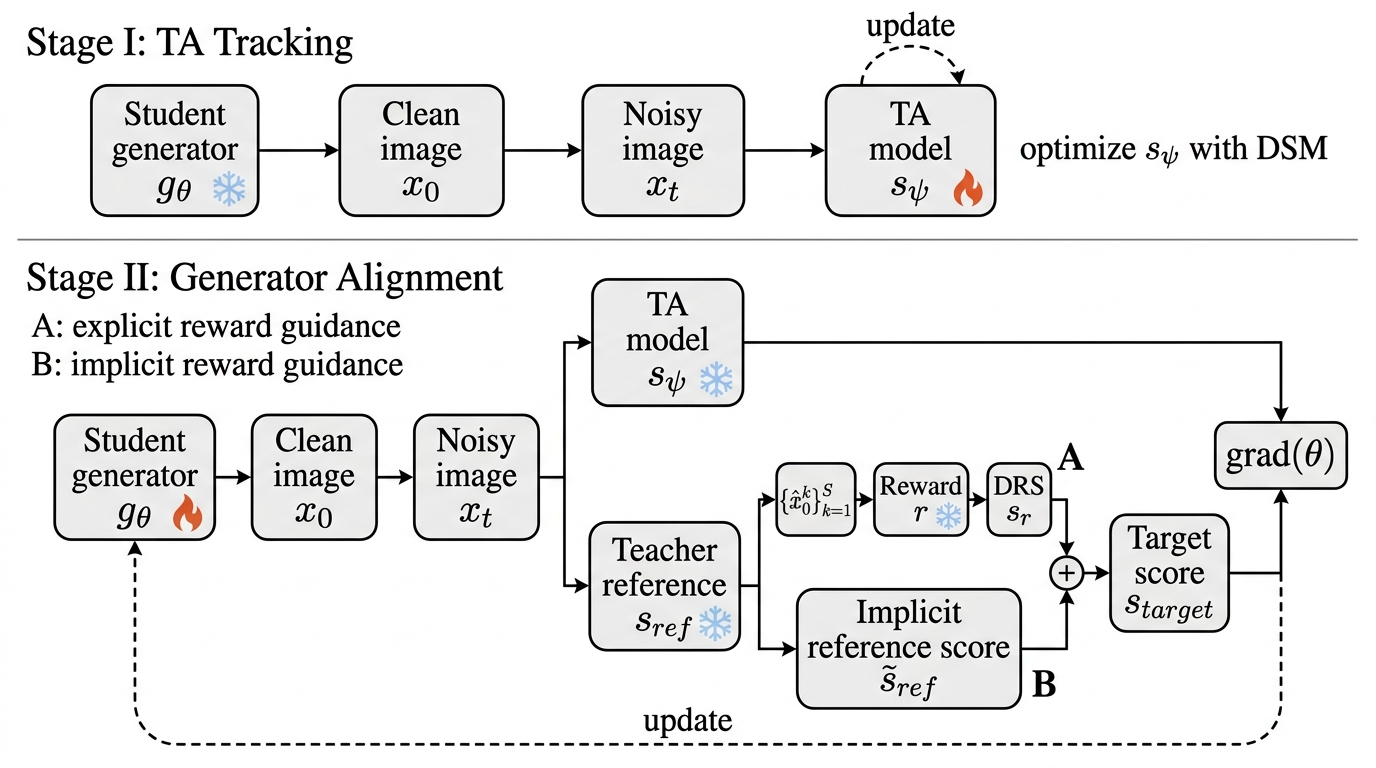}
\caption{\textbf{The \textsc{Didr} training framework.} \textbf{Stage I (TA):} $s_\psi$ tracks the generator marginals via DSM. \textbf{Stage II (Generator):} $g_\theta$ is updated toward the reward-tilted target score $\tilde{s}_{\mathrm{ref}} + s_r$.}
\label{fig:workflow}
\end{figure}

\begin{algorithm}[t]
\caption{\textsc{Didr} Training (see Appendix~\ref{app:alg_detail} for full pseudocode)}
\label{alg:didr}
\SetAlgoLined
\small
\While{\textnormal{not converged}}{
\textbf{Stage I -- TA update:} sample $x_0=g_\theta(z,c)$, diffuse to $x_t$, and update $s_\psi$ by DSM to track $p_{\theta,t}$\;
\BlankLine
\textbf{Stage II -- Generator update:}\;
\quad sample $x_0=g_\theta(z,c)$ and diffuse to $x_t$\;
\quad run $K$ differentiable $S$-step reference denoising chains from $x_t$ to obtain $\{\hat{x}_0^{(k)}\}_{k=1}^{K}$\;
\quad compute $s_r=\frac{1}{\tau}\sum_k\omega^{(k)}\nabla_{x_t}r(\hat{x}_0^{(k)},c)$, with $\omega^{(k)}\propto\exp(r(\hat{x}_0^{(k)},c)/\tau)$\;
\quad update $\theta$ using $w(t)(s_\psi-\tilde{s}_{\mathrm{ref}}-s_r)\partial x_t/\partial\theta$\;
}
\end{algorithm}

\section{Additional Qualitative Results}
\label{app:qualitative}

This section provides supplementary visual comparisons. Figure~\ref{fig:tau_ablation} illustrates the qualitative effect of the temperature hyperparameter $\tau$ on generated images. Figure~\ref{fig:onestep_vs_multistep} compares one-step \textsc{Didr} against 50-step multi-step baselines on matched prompts. Figure~\ref{fig:failure} documents representative failure cases.

\begin{figure}[h]
\centering
\includegraphics[width=\textwidth]{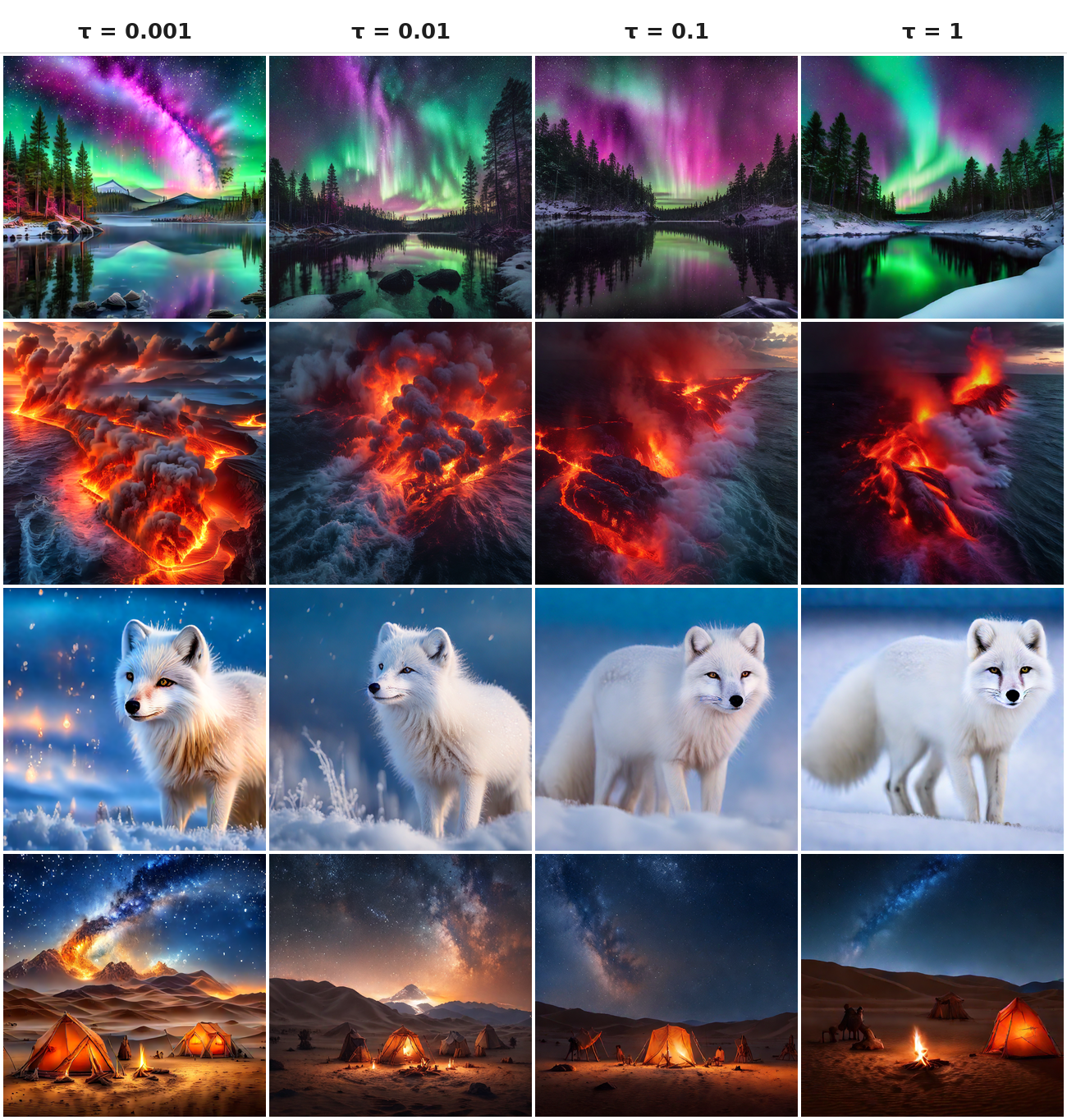}
\caption{\textbf{Qualitative effect of temperature $\tau$.} Images generated from the same prompt at decreasing $\tau$ (left to right). Smaller $\tau$ sharpens reward weighting and increases visual appeal, but introduces over-saturation and fine-detail artifacts at very low values, illustrating the preference--fidelity trade-off. Prompts in Appendix~\ref{app:prompts}.}
\label{fig:tau_ablation}
\end{figure}

\begin{figure}[h]
\centering
\includegraphics[width=\textwidth]{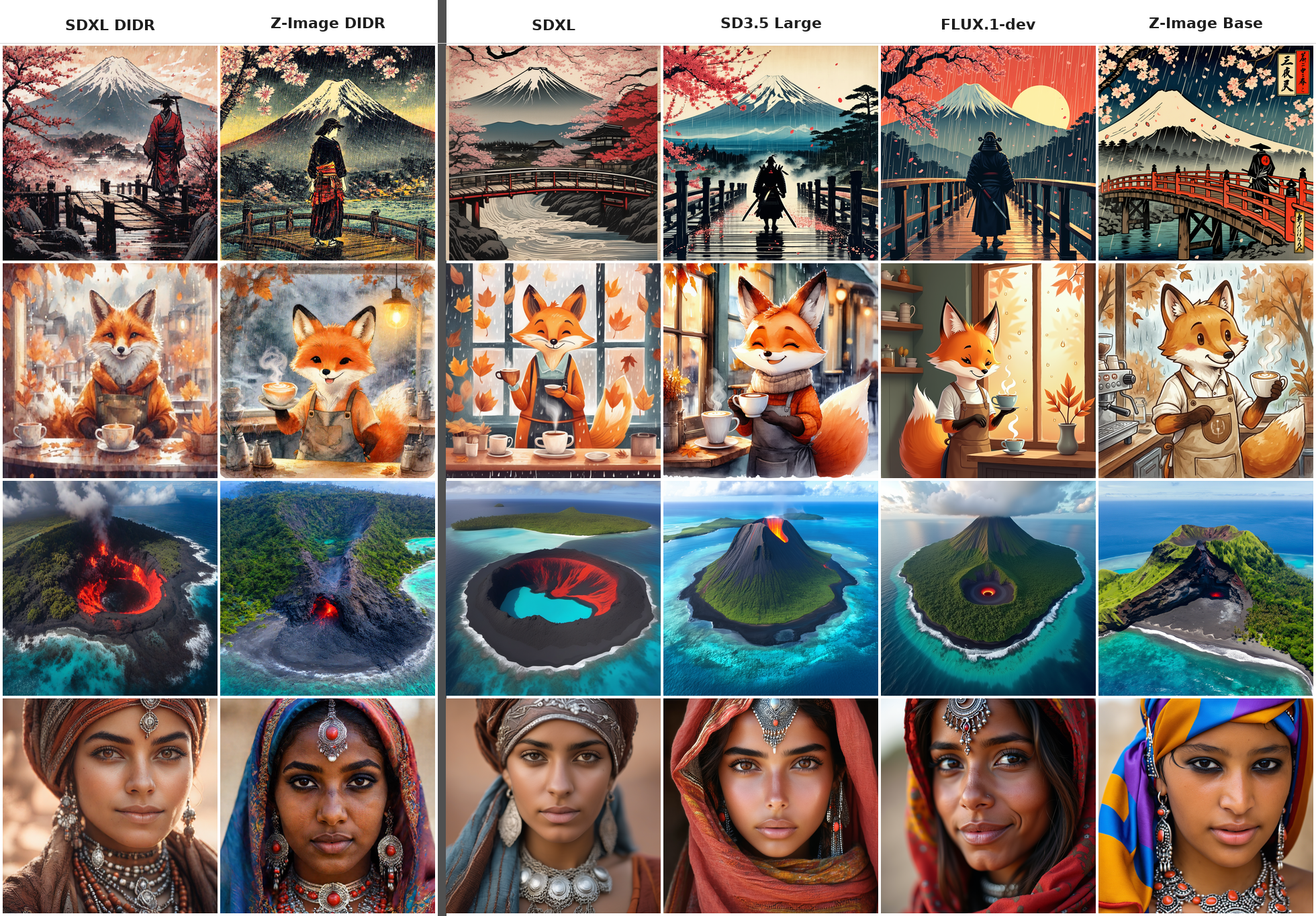}
\caption{\textbf{Qualitative comparison against multi-step baselines.} One-step \textsc{Didr} compared with 50-step SDXL, 50-step Z-Image, 50-step FLUX-dev, and 28-step SD3.5-Large on matched prompts. \textsc{Didr} achieves comparable or superior visual quality and prompt fidelity in a single inference step. Prompts in Appendix~\ref{app:prompts}.}
\label{fig:onestep_vs_multistep}
\end{figure}

\begin{figure}[h]
\centering
\includegraphics[width=\textwidth]{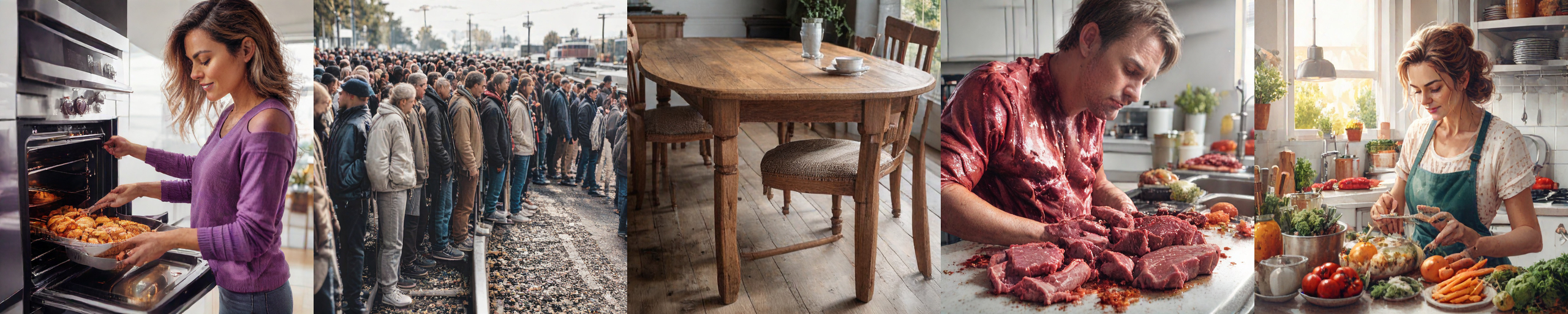}
\caption{\textbf{Failure cases of \textsc{Didr}:} anatomical errors, incorrect counts, structural distortions, and subject--background entanglement. Prompts in Appendix~\ref{app:prompts}.}
\label{fig:failure}
\end{figure}

\section{Theoretical Derivations}
\label{app:theory}

\subsection{Proof of the RLHF-Target Equivalence (Eq.~\eqref{eq:rlhf_kl})}
\label{app:proof_lem1}
\begin{proof}
Let $Z(c)\triangleq\int q_0(x_0|c)\exp(r(x_0,c)/\tau)\diff x_0<\infty$ and $q^*(x_0|c)\triangleq q_0(x_0|c)\exp(r/\tau)/Z(c)$. Expanding $\mathcal{L}(\theta)$:
\begin{align}
\mathcal{L}(\theta)
&= \tau\int p_\theta\!\left[\log p_\theta - \log q_0 - \frac{r}{\tau}\right]\diff x_0
= \tau\int p_\theta\!\left[\log p_\theta - \log\!\left(q_0\,e^{r/\tau}\right)\right]\diff x_0.
\end{align}
Since $q_0 e^{r/\tau} = Z(c)\,q^*$, this becomes $\tau\,\mathcal{D}_{\mathrm{KL}}(p_\theta\|q^*) - \tau\log Z(c)$. As $Z(c)$ is independent of $\theta$, the two objectives share the same minimizer.
\end{proof}

\subsection{Proof of Proposition~\ref{thm:ikl_equiv} (IKL Shares the RLHF Minimizer)}
\label{app:proof_thm_ikl_equiv}
\begin{proof}
\textbf{(i) Non-negativity.} Follows immediately from $\mathcal{D}_{\mathrm{KL}}\geq 0$ and $w(t)>0$.

\textbf{(ii) Convexity.} $\mathcal{D}_{\mathrm{KL}}(p\|q^*)$ is convex in $p$ for fixed $q^*$; a positively weighted integral of convex functionals is convex.

\textbf{(iii) Unique minimizer.} $\mathcal{L}_{\mathrm{IKL}}=0$ iff $\mathcal{D}_{\mathrm{KL}}(p_t\|q_t^*)=0$ for a.e.\ $t\in(0,T]$, since $w(t)>0$ and $\mathcal{D}_{\mathrm{KL}}=0$ iff the distributions coincide. This forces $p_t=q_t^*$ on a set of full measure in $(0,T]$, hence on a sequence $t_n\downarrow 0$. Since SDE marginals are weakly continuous in $t$ (standard for Lipschitz drift and diffusion coefficients), taking $n\to\infty$ gives $p_0=q_0^*=q^*$, which by Eq.~\eqref{eq:reward_tilted_target} is the unique minimizer of $\mathcal{L}$.
\end{proof}

\subsection{Proof of Theorem~\ref{thm:score_gradient} (Score-based DIDR Gradient)}
\label{app:proof_thm2}
\begin{proof}
Let $x_t = \mathcal{F}(g_\theta(z,c),\mathbf{w},t)$, $v_\theta\triangleq\partial x_t/\partial\theta$, and $s_\theta\triangleq\nabla_{x_t}\log p_{\theta,t}$. Differentiating $\mathcal{D}_{\mathrm{KL}}(p_{\theta,t}\|q_t^*)=\int p_{\theta,t}\log(p_{\theta,t}/q_t^*)\diff x_t$ in $\theta$:
\begin{equation}
\nabla_\theta\,\mathcal{D}_{\mathrm{KL}}(p_{\theta,t}\|q_t^*)
= \underbrace{\int\frac{\partial p_{\theta,t}}{\partial\theta}\log\frac{p_{\theta,t}}{q_t^*}\,\diff x_t}_{\mathrm{(I)}}
+ \underbrace{\int p_{\theta,t}\,\frac{\partial\log p_{\theta,t}}{\partial\theta}\,\diff x_t}_{\mathrm{(II)}}.
\end{equation}
Term (II) equals $\partial_\theta\!\int p_{\theta,t}\diff x_t = 0$.

\textbf{Continuity equation.} For any smooth test function $\phi$, differentiating both sides of $\E_{z,\mathbf{w}}[\phi(x_t)]=\int\phi\,p_{\theta,t}\diff x_t$ in $\theta$:
\begin{equation}
\int\phi\,\frac{\partial p_{\theta,t}}{\partial\theta}\diff x_t
= \E_{z,\mathbf{w}}\!\bigl[\nabla_{x_t}\phi(x_t)\cdot v_\theta\bigr]
= \int p_{\theta,t}\,\nabla_{x_t}\phi\cdot v_\theta\,\diff x_t
= -\int\phi\,\nabla_{x_t}\cdot(p_{\theta,t}\,v_\theta)\,\diff x_t,
\end{equation}
where the last step is integration by parts. Since $\phi$ is arbitrary,
\begin{equation}
\label{eq:continuity}
\frac{\partial p_{\theta,t}}{\partial\theta} = -\nabla_{x_t}\cdot(p_{\theta,t}\,v_\theta).
\end{equation}

\textbf{KL gradient.} Substituting \eqref{eq:continuity} into term (I) and integrating by parts:
\begin{align}
\mathrm{(I)}
&= -\int\nabla_{x_t}\cdot(p_{\theta,t}\,v_\theta)\,\log\frac{p_{\theta,t}}{q_t^*}\,\diff x_t \nonumber\\
&= \int p_{\theta,t}\,v_\theta\cdot\nabla_{x_t}\log\frac{p_{\theta,t}}{q_t^*}\,\diff x_t \nonumber\\
&= \int p_{\theta,t}\,v_\theta\cdot\bigl(s_\theta(x_t,t,c) - \nabla_{x_t}\log q_t^*(x_t|c)\bigr)\,\diff x_t.
\end{align}
Rewriting as an expectation over $(z,\mathbf{w})$ via $\int p_{\theta,t}\,\varphi\,\diff x_t = \E_{z,\mathbf{w}}[\varphi(x_t)]$ and substituting $v_\theta = \partial x_t/\partial\theta$:
\begin{equation}
\nabla_\theta\,\mathcal{D}_{\mathrm{KL}}(p_{\theta,t}\|q_t^*) = \E_{z,\mathbf{w}}\!\left[\bigl(s_\theta(x_t,t,c)-\nabla_{x_t}\log q_t^*(x_t|c)\bigr)\frac{\partial x_t}{\partial\theta}\right].
\end{equation}
Multiplying by $w(t)$, integrating over $t\in(0,T]$, and exchanging the $t$-integral with the expectation via Fubini gives Eq.~\eqref{eq:ikl_grad}.

\textbf{Target score decomposition} ($\nabla_{x_t}\log q_t^* = s_{\mathrm{ref}} + s_r$)\textbf{.}
Substituting $q^*(x_0|c) = q_0(x_0|c)\exp(r/\tau)/Z(c)$ into $q_t^*(x_t|c)=\int q_t(x_t|x_0)\,q^*(x_0|c)\diff x_0$ and using the Bayes factorization $q_0(x_0|c)\,q_t(x_t|x_0)=q_t(x_t|c)\,q(x_0|x_t,c)$:
\begin{align}
q_t^*(x_t|c)
&= \frac{1}{Z(c)}\int q_t(x_t|x_0)\,q_0(x_0|c)\exp\!\left(\frac{r(x_0,c)}{\tau}\right)\diff x_0 \nonumber\\
&= \frac{q_t(x_t|c)}{Z(c)}\int q(x_0|x_t,c)\exp\!\left(\frac{r(x_0,c)}{\tau}\right)\diff x_0 \nonumber\\
&= \frac{q_t(x_t|c)}{Z(c)}\;\E_{x_0\sim q(x_0|x_t,c)}\!\left[\exp\!\left(\frac{r(x_0,c)}{\tau}\right)\right].
\end{align}

Taking the logarithm of the factored form:
\begin{equation}
\log q_t^*(x_t|c) = \log q_t(x_t|c) + \log\E_{x_0\sim q(x_0|x_t,c)}\!\left[\exp\!\left(\frac{r(x_0,c)}{\tau}\right)\right] - \log Z(c).
\end{equation}
Applying $\nabla_{x_t}$: since $Z(c)$ does not depend on $x_t$, its gradient vanishes, giving
\begin{equation}
\nabla_{x_t}\log q_t^*(x_t|c)
= \underbrace{\nabla_{x_t}\log q_t(x_t|c)}_{=\,s_{\mathrm{ref}}(x_t,t,c)}
+ \underbrace{\nabla_{x_t}\log\E_{x_0\sim q(x_0|x_t,c)}\!\left[\exp\!\left(\frac{r(x_0,c)}{\tau}\right)\right]}_{\mathrm{DRS}(x_t,t,c)},
\end{equation}
which is exactly the decomposition in Eq.~\eqref{eq:ikl_grad}. The DRS term is intractable as written; Appendix~\ref{app:proof_lem4} derives a differentiable estimator for it.
\end{proof}

\subsection{Proof of Proposition~\ref{prop:drp} (Diffused Reward Proxy)}
\label{app:proof_lem4}
\begin{proof}
We derive a differentiable estimator for
\begin{equation}
\mathrm{DRS}(x_t,t,c)
= \nabla_{x_t}\log\E_{x_0\sim q(x_0|x_t,c)}\!\left[\exp\!\left(\frac{r(x_0,c)}{\tau}\right)\right].
\end{equation}

By the identity $\nabla\log f = \nabla f/f$:
\begin{equation}
\mathrm{DRS}(x_t,t,c)
= \frac{\nabla_{x_t}\,\E_{x_0\sim q(x_0|x_t,c)}\!\left[\exp\!\left(\tfrac{r(x_0,c)}{\tau}\right)\right]}
{\E_{\tilde x_0\sim q(\tilde x_0|x_t,c)}\!\left[\exp\!\left(\tfrac{r(\tilde x_0,c)}{\tau}\right)\right]}.
\end{equation}

To handle the numerator, we obtain differentiable approximate samples from the reference posterior $q(x_0|x_t,c)$ by running a denoising chain with the frozen reference model. Denote this map $x_0 = \mathcal{G}_{\mathrm{ref}}(x_t,\boldsymbol{\epsilon})$, initialized at $\hat{x}_{t_S}=x_t$, where $\boldsymbol{\epsilon}=(\epsilon^{(S)},\ldots,\epsilon^{(1)})$ is the sequence of injected noise vectors (empty for deterministic chains). The chain differs by model type:

\emph{VP diffusion (SDXL).} At each step $j=S,S{-}1,\ldots,1$, the Tweedie formula gives a clean-image estimate,
\[
\hat{x}_0^{(j)} = \frac{\hat{x}_{t_j}+\sigma_{t_j}^2\,s_{\mathrm{ref}}(\hat{x}_{t_j},t_j,c)}{\alpha_{t_j}},
\]
followed by a stochastic DDPM-style update
\[
\hat{x}_{t_{j-1}} = \alpha_{t_{j-1}}\hat{x}_0^{(j)} + \sigma_{t_{j-1}}\frac{\hat{x}_{t_j}-\alpha_{t_j}\hat{x}_0^{(j)}}{\sigma_{t_j}} + \tilde\sigma_{t_j}\,\epsilon^{(j)},
\qquad \epsilon^{(j)}\sim\mathcal{N}(0,I),
\]
where $\tilde\sigma_{t_j}=\sigma_{t_{j-1}}\sqrt{1-\dfrac{\alpha_{t_j}^2\,\sigma_{t_{j-1}}^2}{\alpha_{t_{j-1}}^2\,\sigma_{t_j}^2}}$ is the DDPM posterior standard deviation. Injecting independent noise $\epsilon^{(j)}$ for each of the $K$ chains yields $K$ diverse approximate posterior samples; the output of each chain is $\hat{x}_0^{(1)}$.

\emph{Flow matching (Z-Image).} A fixed Euler schedule yields identical $\hat{x}_0$ across all $K$ chains. Instead, each chain draws $S$ timesteps uniformly at random from $(0,t_S]$, runs the deterministic Euler ODE on its own schedule,
\[
\hat{x}_{t_{j-1}^{(k)}} = \hat{x}_{t_j^{(k)}} + \bigl(t_{j-1}^{(k)}-t_j^{(k)}\bigr)\,v_{\mathrm{ref}}\!\left(\hat{x}_{t_j^{(k)}},t_j^{(k)},c\right),
\]
and returns $\hat{x}_0^{(1)}=\hat{x}_{t_1^{(k)}}-t_1^{(k)}\,v_{\mathrm{ref}}(\hat{x}_{t_1^{(k)}},t_1^{(k)},c)$, yielding $K$ diverse approximate denoising endpoints induced by randomized discretization paths.

In both cases $\mathcal{G}_{\mathrm{ref}}$ is differentiable in $x_t$ for fixed $\boldsymbol{\epsilon}$, since all operations are compositions of smooth network evaluations and linear maps. Under this reparameterization the expectation becomes $\E_{\boldsymbol{\epsilon}}[\exp(r(\mathcal{G}_{\mathrm{ref}}(x_t,\boldsymbol{\epsilon}),c)/\tau)]$, in which $x_t$ enters explicitly through $\mathcal{G}_{\mathrm{ref}}$ rather than through the measure. Interchanging gradient and expectation via dominated convergence (justified when $r$ is smooth with bounded gradient and $\mathcal{G}_{\mathrm{ref}}$ is uniformly Lipschitz in $x_t$):
\begin{align}
\nabla_{x_t}\,\E_{x_0\sim q(x_0|x_t,c)}\!\left[\exp\!\left(\frac{r(x_0,c)}{\tau}\right)\right]
&= \E_{\boldsymbol{\epsilon}}\!\left[\nabla_{x_t}\exp\!\left(\frac{r(\mathcal{G}_{\mathrm{ref}}(x_t,\boldsymbol{\epsilon}),c)}{\tau}\right)\right].
\end{align}
Applying the chain rule:
\begin{equation}
\nabla_{x_t}\exp\!\left(\frac{r(\mathcal{G}_{\mathrm{ref}}(x_t,\boldsymbol{\epsilon}),c)}{\tau}\right)
= \exp\!\left(\frac{r(x_0,c)}{\tau}\right)\cdot\frac{1}{\tau}\underbrace{\frac{\partial r(x_0,c)}{\partial x_0}\frac{\partial\mathcal{G}_{\mathrm{ref}}(x_t,\boldsymbol{\epsilon})}{\partial x_t}}_{\triangleq\;\nabla_{x_t}r(x_0,c)},
\end{equation}
where $\nabla_{x_t}r(x_0,c)$ denotes the pathwise gradient of $r$ through the differentiable denoising chain. Converting back to the $q(x_0|x_t,c)$ expectation:
\begin{equation}
\nabla_{x_t}\,\E_{x_0\sim q(x_0|x_t,c)}\!\left[\exp\!\left(\frac{r(x_0,c)}{\tau}\right)\right]
= \E_{x_0\sim q(x_0|x_t,c)}\!\left[\exp\!\left(\frac{r(x_0,c)}{\tau}\right)\cdot\frac{1}{\tau}\nabla_{x_t}r(x_0,c)\right].
\end{equation}

Dividing numerator and denominator:
\begin{equation}
\mathrm{DRS}(x_t,t,c)
= \E_{x_0\sim q(x_0|x_t,c)}\!\left[
\frac{\exp\!\left(\tfrac{r(x_0,c)}{\tau}\right)}
{\E_{\tilde x_0\sim q(\tilde x_0|x_t,c)}\!\left[\exp\!\left(\tfrac{r(\tilde x_0,c)}{\tau}\right)\right]}
\cdot\frac{1}{\tau}\nabla_{x_t}r(x_0,c)
\right],
\end{equation}
which is precisely the softmax-weighted gradient estimator in Eq.~\eqref{eq:drp_estimator}.
\end{proof}

\begin{remark}[Regularity assumptions]
The interchange of gradient and expectation above requires: (i) $r$ is smooth with bounded gradient, and (ii) $\mathcal{G}_{\mathrm{ref}}$ is uniformly Lipschitz in $x_t$. Condition~(i) holds for standard differentiable reward models such as PickScore. Condition~(ii) holds because $\mathcal{G}_{\mathrm{ref}}$ is a finite composition of smooth network evaluations and linear maps, hence globally Lipschitz on any compact domain.
\end{remark}

\subsection{A Bimodal Gaussian Example of Terminal Reward Domination}
\label{app:reward_domination_gaussian}

All results below hold under the mild assumption that the two modes are well-separated ($\mu/\sigma\gg 1$), in which regime $\Phi(-\mu/\sigma)\approx 0$ and every approximation becomes exact as $\mu/\sigma\to\infty$. The $\approx$ symbol denotes equality up to corrections of order $\Phi(-\mu/\sigma)$.

\paragraph{Setup.}
Consider the bimodal reference
\[
p_0
=
\tfrac12\mathcal N(-\mu,\sigma^2)
+
\tfrac12\mathcal N(\mu,\sigma^2),
\qquad
\mu>\sigma>0,
\]
the generator family
\[
q_{\alpha,0}
=
(1-\alpha)\mathcal N(-\mu,\sigma^2)
+
\alpha\mathcal N(\mu,\sigma^2),
\qquad
\alpha\in[0,1],
\]
and the binary reward $r(x_0)=\mathbf 1[x_0>0]$.

\paragraph{Exact reward expectation.}
By direct computation,
\begin{align}
\E_{q_{\alpha,0}}[r]
&= (1-\alpha)\,P\!\bigl(\mathcal N(-\mu,\sigma^2)>0\bigr)
+\alpha\,P\!\bigl(\mathcal N(\mu,\sigma^2)>0\bigr) \nonumber\\
&= (1-\alpha)\,\Phi\!\left(-\tfrac{\mu}{\sigma}\right)
+\alpha\,\Phi\!\left(\tfrac{\mu}{\sigma}\right)
= \alpha + (1-2\alpha)\,\Phi\!\left(-\tfrac{\mu}{\sigma}\right),
\end{align}
where $\Phi$ is the standard normal CDF. Since $\Phi(-\mu/\sigma)\to 0$ as $\mu/\sigma\to\infty$,
\[
\E_{q_{\alpha,0}}[r] \approx \alpha,
\qquad
\frac{d}{d\alpha}\E_{q_{\alpha,0}}[r]\big|_{\alpha=1} = 1 - 2\Phi\!\left(-\tfrac{\mu}{\sigma}\right) \approx 1.
\]

\paragraph{Forward marginals.}
Under the VP forward process with $\bar\alpha_t=e^{-\gamma t}$, $t\in[0,\infty)$, the forward kernel is $q_t(x_t|x_0)=\mathcal{N}(x_t;\sqrt{\bar\alpha_t}\,x_0,(1-\bar\alpha_t)I)$, so the marginals are
\[
q_{\alpha,t}
=
(1-\alpha)\phi_{-,t}
+
\alpha\phi_{+,t},
\qquad
p_t
=
\tfrac12\phi_{-,t}
+
\tfrac12\phi_{+,t},
\]
where $\phi_{\pm,t}=\mathcal N(\pm m_t,\Sigma_t)$ with
\[
m_t=\sqrt{\bar\alpha_t}\,\mu,
\qquad
\Sigma_t=1-\bar\alpha_t(1-\sigma^2).
\]

\paragraph{Objective and convexity.}
The endpoint-reward objective (to be \emph{minimized}) is
\[
\mathcal{L}_{\rm term}(\alpha)
=
-\E_{q_{\alpha,0}}[r]
+
\tau\int_0^\infty
D_t(\alpha)\,\diff t,
\qquad
D_t(\alpha):=
\mathrm{KL}(q_{\alpha,t}\|p_t).
\]
Since $q_{\alpha,t}$ is affine in $\alpha$ and KL divergence is convex in its first argument, $D_t(\alpha)$ is convex in $\alpha$, and $-\E_{q_{\alpha,0}}[r]$ is linear in $\alpha$. Hence $\mathcal{L}_{\rm term}$ is convex in $\alpha$, so $\mathcal{L}_{\rm term}'$ is non-decreasing on $[0,1]$. Consequently, $\alpha^*=1$ (collapse to the positive mode) occurs whenever the left derivative satisfies
\[
\mathcal{L}_{\rm term}'(1^-)\le 0,
\]
since convexity then forces $\mathcal{L}_{\rm term}'(\alpha)\le\mathcal{L}_{\rm term}'(1^-)\le 0$ for all $\alpha\in[0,1)$, making $\mathcal{L}_{\rm term}$ non-increasing throughout and the minimum attained at $\alpha^*=1$.

\paragraph{Computing $D_t'(1)$.}
Differentiating $D_t(\alpha)=\mathrm{KL}(q_{\alpha,t}\|p_t)$ with respect to $\alpha$ and using $\partial_\alpha q_{\alpha,t}=\phi_{+,t}-\phi_{-,t}$ (together with the fact that $\partial_\alpha\int q_{\alpha,t}\,\diff x=0$, so the term from differentiating the $\log q_{\alpha,t}$ factor vanishes by normalization), we obtain
\[
D_t'(\alpha)
=
\int
(\phi_{+,t}-\phi_{-,t})\,
\log\frac{q_{\alpha,t}(x)}{p_t(x)}
\,\diff x.
\]
At $\alpha=1$, $q_{1,t}=\phi_{+,t}$, so
\[
D_t'(1)
=
\int
(\phi_{+,t}(x)-\phi_{-,t}(x))\,
\log\frac{\phi_{+,t}(x)}{\frac12(\phi_{+,t}(x)+\phi_{-,t}(x))}
\,\diff x.
\]
Since $\phi_{\pm,t}=\mathcal{N}(\pm m_t,\Sigma_t)$, the Gaussian log-ratio is
\[
\log\frac{\phi_{+,t}(x)}{\phi_{-,t}(x)}
=
-\frac{(x-m_t)^2-(x+m_t)^2}{2\Sigma_t}
=
\frac{2m_t x}{\Sigma_t},
\]
so $\phi_{-,t}(x)/\phi_{+,t}(x)=\exp(-2m_t x/\Sigma_t)$. Substituting:
\[
\log\frac{\phi_{+,t}(x)}{\frac12(\phi_{+,t}+\phi_{-,t})}
=
\log\frac{2}{1+e^{-2m_tx/\Sigma_t}}.
\]

We now simplify by symmetry. Split the integral into contributions from $\phi_{+,t}$ and $\phi_{-,t}$:
\[
D_t'(1)
=
\int\phi_{+,t}(x)\,\log\frac{2}{1+e^{-2m_tx/\Sigma_t}}\,\diff x
-
\int\phi_{-,t}(x)\,\log\frac{2}{1+e^{-2m_tx/\Sigma_t}}\,\diff x.
\]
In the second integral, substitute $x\mapsto -x$; since $\phi_{-,t}(-x)=\phi_{+,t}(x)$:
\[
\int\phi_{-,t}(x)\,\log\frac{2}{1+e^{-2m_tx/\Sigma_t}}\,\diff x
=
\int\phi_{+,t}(x)\,\log\frac{2}{1+e^{2m_tx/\Sigma_t}}\,\diff x.
\]
Combining both integrals:
\begin{equation}
\label{eq:Dt_prime_1}
D_t'(1)
=
\int\phi_{+,t}(x)\,\log\frac{1+e^{2m_tx/\Sigma_t}}{1+e^{-2m_tx/\Sigma_t}}\,\diff x
=
\E_{x\sim\phi_{+,t}}\!\left[\log\frac{1+e^{2m_tx/\Sigma_t}}{1+e^{-2m_tx/\Sigma_t}}\right].
\end{equation}

Finally, we invoke the well-separated assumption. Define $a_t\triangleq m_t/\sqrt{\Sigma_t}$. Under $\phi_{+,t}=\mathcal{N}(m_t,\Sigma_t)$, a typical sample satisfies $x\approx m_t$, so $2m_t x/\Sigma_t\approx 2m_t^2/\Sigma_t=2a_t^2$. In the regime $\mu/\sigma\gg1$ (so $a_t\gg1$ for all $t$ where the diffusion has not yet erased the modes), $e^{2m_tx/\Sigma_t}\gg1$ and $e^{-2m_tx/\Sigma_t}\approx0$ throughout the support of $\phi_{+,t}$. Therefore,
\[
\log\frac{1+e^{2m_tx/\Sigma_t}}{1+e^{-2m_tx/\Sigma_t}}
\approx
\log e^{2m_tx/\Sigma_t}
=
\frac{2m_t x}{\Sigma_t}.
\]
Substituting into Eq.~\eqref{eq:Dt_prime_1} and using $\E_{x\sim\phi_{+,t}}[x]=m_t$:
\[
D_t'(1)
\approx
\E_{x\sim\phi_{+,t}}\!\left[\frac{2m_t x}{\Sigma_t}\right]
=
\frac{2m_t}{\Sigma_t}\,\E_{x\sim\phi_{+,t}}[x]
=
\frac{2m_t^2}{\Sigma_t}.
\]

\paragraph{Collapse condition.}
Combining the derivative of the reward term ($\approx -1$ after negation) and the regularizer:
\[
\mathcal{L}_{\rm term}'(1)
\approx
-1
+
\tau\int_0^\infty
\frac{2m_t^2}{\Sigma_t}\,\diff t.
\]
The condition $\mathcal{L}_{\rm term}'(1)\le0$ gives the collapse threshold
\[
\tau
\le
\frac{1}{B_{\rm crit}},
\qquad
B_{\rm crit}
\triangleq
\int_0^\infty
\frac{2m_t^2}{\Sigma_t}\,\diff t.
\]

\paragraph{Closed-form evaluation of $B_{\rm crit}$.}
Substituting $m_t^2=\bar\alpha_t\mu^2$ and $\Sigma_t=1-\bar\alpha_t(1-\sigma^2)$:
\[
B_{\rm crit}
=
\int_0^\infty
\frac{2\mu^2\,e^{-\gamma t}}
{1-e^{-\gamma t}(1-\sigma^2)}
\,\diff t.
\]
Change variables $u=e^{-\gamma t}$, so $\diff u=-\gamma u\,\diff t$ and the limits $t:0\to\infty$ become $u:1\to 0$:
\[
B_{\rm crit}
=
\int_1^0
\frac{2\mu^2\,u}
{1-(1-\sigma^2)u}
\cdot\frac{-\diff u}{\gamma u}
=
\frac{2\mu^2}{\gamma}
\int_0^1
\frac{\diff u}{1-(1-\sigma^2)u}.
\]
For $\sigma^2\neq1$, the integral evaluates via $\int_0^1\frac{\diff u}{1-cu}=\frac{-\log(1-c)}{c}$ with $c=1-\sigma^2$:
\[
B_{\rm crit}
=
\frac{2\mu^2}{\gamma}\cdot\frac{-\log\sigma^2}{1-\sigma^2}.
\]
Therefore, the collapse condition is
\[
\boxed{
\tau
\le
\frac{\gamma(1-\sigma^2)}{2\mu^2(-\log\sigma^2)}
\quad
\Longrightarrow
\quad
\alpha^*=1.
}
\]
The case $\sigma^2=1$ follows by L'H\^opital's rule (or a direct integral): $\int_0^1 \diff u/(1-0)=1$, giving
\[
\boxed{
\tau\le \frac{\gamma}{2\mu^2}
\quad
\Longrightarrow
\quad
\alpha^*=1.
}
\]

This shows that terminal reward domination occurs under the integral forward KL penalty for any finite $\tau$ below a closed-form threshold. Larger $\gamma$ (faster diffusion) raises the critical $\tau$, making collapse to the rewarded mode easier to trigger at any fixed regularization strength.

\section{1-D Toy Experiment}
\label{app:toy_experiment}

This section describes the setup for the 1-D empirical validation of terminal reward domination (Figure~\ref{fig:schematic_combined}(b)).

\paragraph{Reference distribution and reward.}
The reference is a symmetric bimodal Gaussian
\[
q_0 = \tfrac{1}{2}\mathcal{N}(-\mu,\sigma^2) + \tfrac{1}{2}\mathcal{N}(\mu,\sigma^2), \qquad \mu=2,\;\sigma=0.5,
\]
with ideal binary reward $r_{\mathrm{hard}}(x)=\mathbf{1}[x>0]$. For gradient-based training, we use the smooth approximation
\[
r(x)=\operatorname{sigmoid}(\beta x), \qquad \beta=20,
\]
so that $r(x)$ approximates $r_{\mathrm{hard}}(x)$ except in a narrow neighborhood of the decision boundary while remaining differentiable.

\paragraph{Forward process.}
VP diffusion with rate $\gamma=20$: $\bar\alpha_t=e^{-\gamma t}$, $t\in[0,T]$, $T=0.25$, giving $\bar\alpha_T=e^{-5}\approx0.007$. This places the experiment firmly in the collapse regime: the closed-form domination threshold derived in Appendix~\ref{app:reward_domination_gaussian} evaluates to $\tau_{\mathrm{crit}}\approx1.35>\tau=1$.

\paragraph{Networks.}
Both $s_{\mathrm{ref}}$ and $s_\psi$ are MLPs with input $(x,t)\in\mathbb{R}^2$, three hidden layers of width~128, SiLU activations, and a scalar output (noise-prediction parameterisation; converted to score via $s=-\hat{\varepsilon}/\sqrt{1-\bar\alpha_t}$).
The generator $g_\theta:\mathcal{N}(0,1)\to\mathbb{R}$ uses the same architecture with a scalar input $z$.

\paragraph{Training pipeline.}
\begin{enumerate}[1.]
    \item \textbf{Reference model.} $s_{\mathrm{ref}}$ is trained for 10{,}000 steps via DSM on samples from $q_0$ (Adam, lr=$3\times10^{-4}$, batch~2{,}048). Weights are frozen afterwards.
    \item \textbf{Generator distillation.} $g_\theta$ is initialised by regressing against 30-step DDIM samples from $s_{\mathrm{ref}}$ using matched noise (3{,}000 steps, Adam, lr=$10^{-3}$, batch~2{,}048).
    \item \textbf{Teaching Assistant initialisation.} $s_\psi$ is copied from $s_{\mathrm{ref}}$ and fine-tuned during alignment.
    \item \textbf{Alignment.} 6{,}000 outer steps; each step runs 5 TA DSM updates (Adam, lr=$3\times10^{-4}$) followed by one generator update (Adam, lr=$10^{-4}$, batch~2{,}048). The two methods differ only in $s_{\mathrm{target}}$:
\end{enumerate}

\begin{center}
\begin{tabular}{ll}
\toprule
Method & Reward location \\
\midrule
DI++ & endpoint $x_0$ only \\
\textsc{Didr} & every noise level via DRS \\
\bottomrule
\end{tabular}
\end{center}

\paragraph{Evaluation.}
Final generators are evaluated on $n=10{,}000$ samples using the hard decision boundary $x>0$. For the ideal binary reward $r_{\mathrm{hard}}$, the theoretical optimal weight under $q^*$ is $P_{q^*}(x>0)=\operatorname{sigmoid}(1/\tau)\approx0.731$.
DI++ collapses to $P(x>0)\approx1$ while \textsc{Didr} converges to $P(x>0)\approx0.73$, confirming the theory.

\section{Evaluation Metrics}
\label{app:evaluation_metrics}

We use automatic metrics that measure complementary aspects of text-to-image generation. Preference metrics estimate human visual preference or aesthetics; text-alignment metrics measure whether the image satisfies the prompt; FID measures distributional fidelity against real images. For all metrics except FID, higher is better.

\paragraph{Preference metrics.}
PickScore~\citep{pickscore} is a CLIP-based human preference model trained on Pick-a-Pic pairwise user preferences, and is also the reward used for \textsc{Didr} training. ImageReward~\citep{xu2023imagereward} is a learned image-text reward model trained from human preference annotations for text-to-image outputs. HPSv2.1~\citep{hpsv2} evaluates human preference using the Human Preference Score benchmark and reports results over its standard prompt categories. Aesthetic Score~\citep{laionaes} uses the LAION aesthetic predictor, which estimates visual appeal from CLIP image embeddings and does not directly measure prompt faithfulness.

\paragraph{Text-alignment metrics.}
CLIPScore \citep{hessel2021clipscore,radford2021learning} measures image--text compatibility using CLIP embedding similarity. DPG-Bench \citep{dpgbench} evaluates semantic adherence on dense prompts containing multiple attributes and relations. GenEval \citep{geneval} is an object-focused benchmark that checks compositional correctness, including object presence, counting, color binding, position, and attribute relations.

\paragraph{Fidelity metric.}
FID~\citep{heusel2017gans} compares generated and real image distributions in Inception feature space:
\[
\mathrm{FID}
= \|\mu_r-\mu_g\|_2^2
+ \operatorname{Tr}\!\left(\Sigma_r+\Sigma_g-2(\Sigma_r\Sigma_g)^{1/2}\right),
\]
where $(\mu_r,\Sigma_r)$ and $(\mu_g,\Sigma_g)$ are Gaussian approximations of real and generated feature distributions. Lower FID indicates closer distributional match to the reference images, but it does not directly measure prompt alignment or human preference.

\section{Per-Category Results}
\label{app:geneval}

Tables~\ref{tab:geneval} and~\ref{tab:hpsv2} provide per-category breakdowns of GenEval and HPSv2.1 scores respectively. On GenEval, \textsc{Didr} achieves the highest overall score among one-step SDXL methods, with particularly strong gains on counting ($0.513$ vs.\ $0.428$ for DI*) and color attribute binding ($0.245$ vs.\ $0.215$). On HPSv2.1, \textsc{Didr}$_{\text{longer}}$ leads across all four style categories. For the Z-Image backbone, \textsc{zimage-Didr} improves substantially over the 1-step Z-Image-Turbo initialization (row marked $^\dagger$) on all categories, suggesting that the gains come from the alignment procedure rather than initialization quality.

\begin{table*}[!h]
\centering
\caption{GenEval per-category breakdown. \textbf{Bold}: best result within each one-step group. $^\dagger$: Z-Image-Turbo at 1~NFE (unofficial; \textsc{Didr} training initialization only). $-$: not reported.}
\label{tab:geneval}
\resizebox{\textwidth}{!}{\begin{tabular}{lclcccccccc}
\toprule
& & & & \multicolumn{6}{c}{\textbf{GenEval Category} $\uparrow$} & \\
\cmidrule(lr){5-10}
\textbf{Model} & \textbf{Steps} & \textbf{Arch.} & \textbf{Params} & \textbf{Single Obj.} & \textbf{Two Obj.} & \textbf{Counting} & \textbf{Colors} & \textbf{Color Attr.} & \textbf{Position} & \textbf{Overall} $\uparrow$ \\
\midrule
\multicolumn{11}{l}{\textit{Multi-step reference}} \\
SDXL~\citep{podell2023sdxl}           & 50 & UNet & 2.6B & 0.973 & 0.745 & 0.381 & 0.847 & 0.218 & 0.142 & 0.551 \\
SDXL-DPO~\citep{wallace2024diffusion} & 50 & UNet & 2.6B & 0.971 & 0.751 & 0.374 & 0.843 & 0.221 & 0.134 & 0.549 \\
SD3.5-large~\citep{sd35}              & 28 & DiT  & 8B   & 0.997 & 0.921 & 0.668 & 0.853 & 0.593 & 0.270 & 0.717 \\
FLUX-dev~\citep{fluxdev}              & 50 & DiT  & 12B  & 0.991 & 0.863 & 0.683 & 0.771 & 0.489 & 0.235 & 0.672 \\
Z-Image~\citep{zimage}                & 50 & DiT  & 6B   & 1.000 & 0.943 & 0.781 & 0.928 & 0.624 & 0.764 & 0.840 \\
\midrule
\multicolumn{11}{l}{\textit{One-step SDXL}} \\
SDXL DMD2~\citep{yin2024improved}                & 1 & UNet & 2.6B & 0.984 & 0.677 & 0.472 & 0.878 & 0.220 & \textbf{0.110} & 0.557 \\
SDXL Diff-Instruct~\citep{Luo2023DiffInstructAU} & 1 & UNet & 2.6B & 0.994 & 0.748 & 0.456 & 0.883 & 0.188 & 0.090 & 0.560 \\
SDXL Diff-Instruct++~\citep{anonymous2024diffinstruct} & 1 & UNet & 2.6B & 0.988 & 0.735 & 0.475 & 0.872 & 0.200 & 0.093 & 0.560 \\
SDXL Diff-Instruct*~\citep{luo2024onestep}       & 1 & UNet & 2.6B & 0.994 & 0.740 & 0.428 & 0.851 & 0.215 & 0.105 & 0.555 \\
\textbf{SDXL \textsc{Didr} (Ours)}                      & 1 & UNet & 2.6B & 0.991 & 0.727 & \textbf{0.513} & \textbf{0.891} & \textbf{0.245} & 0.108 & \textbf{0.579} \\
\textbf{SDXL \textsc{Didr}$_{\text{longer}}$ (Ours)}    & 1 & UNet & 2.6B & \textbf{1.000} & \textbf{0.758} & 0.472 & 0.883 & 0.190 & 0.095 & 0.566 \\
\midrule
\multicolumn{11}{l}{\textit{Z-Image backbone}} \\
Z-Image-Turbo~\citep{zimage}                     & 8 & DiT & 6B & \textbf{1.000} & \textbf{0.904} & \textbf{0.728} & \textbf{0.835} & \textbf{0.578} & \textbf{0.455} & \textbf{0.750} \\
\textbf{Z-Image \textsc{Didr} (Ours)}            & 1 & DiT & 6B & 0.997 & 0.846 & 0.625 & 0.811 & 0.533 & 0.340 & 0.692 \\
\hdashline
Z-Image-Turbo$^\dagger$~\citep{zimage}           & 1 & DiT & 6B & 0.972 & 0.513 & 0.559 & 0.795 & 0.300 & 0.293 & 0.572 \\
\bottomrule
\end{tabular}}
\end{table*}

\subsection*{HPSv2.1 Per-Category Results}
\label{app:hpsv2}

\begin{table*}[!t]
\centering
\caption{HPSv2.1 per-category breakdown. Same conventions as Table~\ref{tab:main}. $^\dagger$: Z-Image-Turbo at 1~NFE (unofficial; \textsc{Didr} training initialization only).}
\label{tab:hpsv2}
\resizebox{\textwidth}{!}{\begin{tabular}{lclcccccc}
\toprule
& & & & \multicolumn{5}{c}{\textbf{HPSv2.1} $\uparrow$} \\
\cmidrule(lr){5-9}
\textbf{Model} & \textbf{Steps} & \textbf{Arch.} & \textbf{Params} & \textbf{Animation} & \textbf{Concept-Art} & \textbf{Painting} & \textbf{Photo} & \textbf{Average} \\
\midrule
\multicolumn{9}{l}{\textit{Multi-step reference}} \\
SDXL~\citep{podell2023sdxl}            & 50 & UNet & 2.6B & 30.89 & 29.16 & 28.91 & 26.97 & 28.98 \\
SDXL-DPO~\citep{wallace2024diffusion}  & 50 & UNet & 2.6B & 32.08 & 30.80 & 30.82 & 28.12 & 30.45 \\
SD3.5-large~\citep{sd35}               & 28 & DiT  & 8B   & 31.87 & 30.05 & 30.31 & 28.01 & 30.06 \\
FLUX-dev~\citep{fluxdev}               & 50 & DiT  & 12B  & 31.95 & 30.35 & 31.21 & 29.10 & 30.65 \\
Z-Image~\citep{zimage}                 & 50 & DiT  & 6B   & 32.53    & 30.42    & 29.58    & 29.76    & 30.57 \\
\midrule
\multicolumn{9}{l}{\textit{One-step SDXL}} \\
SDXL DMD2~\citep{yin2024improved}                  & 1 & UNet & 2.6B & 31.53 & 29.86 & 29.43 & 29.14 & 29.99 \\
SDXL Diff-Instruct~\citep{Luo2023DiffInstructAU}   & 1 & UNet & 2.6B & 32.78 & 31.39 & 31.29 & 30.59 & 31.51 \\
SDXL Diff-Instruct++~\citep{anonymous2024diffinstruct} & 1 & UNet & 2.6B & 32.74 & 31.42 & 31.20 & 30.41 & 31.44 \\
SDXL Diff-Instruct*~\citep{luo2024onestep}          & 1 & UNet & 2.6B & 35.10 & 33.63 & 33.14 & 31.28 & 33.29 \\
\textbf{SDXL \textsc{Didr} (Ours)}                  & 1 & UNet & 2.6B & 35.64 & 34.21 & 33.81 & 31.41 & 33.77 \\
\textbf{SDXL \textsc{Didr}$_{\text{longer}}$ (Ours)} & 1 & UNet & 2.6B & \textbf{35.76} & \textbf{34.43} & \textbf{33.83} & \textbf{31.53} & \textbf{33.89} \\
\midrule
\multicolumn{9}{l}{\textit{Z-Image backbone}} \\
Z-Image-Turbo~\citep{zimage}                        & 8 & DiT & 6B & \textbf{33.44} & \textbf{31.99} & \textbf{31.26} & \textbf{30.76} & \textbf{31.86} \\
\textbf{Z-Image \textsc{Didr} (Ours)}               & 1 & DiT & 6B & 33.39 & 31.62 & 30.38 & 30.21 & 31.40 \\
\hdashline
Z-Image-Turbo$^\dagger$~\citep{zimage}              & 1 & DiT & 6B & 26.50 & 24.22 & 23.63 & 23.81 & 24.54 \\
\bottomrule
\end{tabular}}
\end{table*}

\section{Full Training Algorithm}
\label{app:alg_detail}

Algorithm~\ref{alg:didr_detail} provides the complete pseudocode for \textsc{Didr}, expanding the concise version in Algorithm~\ref{alg:didr}. The outer loop alternates between two stages. Stage~I updates the Teaching Assistant $s_\psi$ via denoising score matching to track the generator's current marginals. Stage~II updates the generator $g_\theta$: for each training step it samples a prompt and noise, forward-diffuses the generated image to a random time $t$, draws $K$ posterior samples via $S$-step reference denoising, computes the DRP estimate $s_r$ as a softmax-weighted gradient, and applies the IKL gradient. The CFG-corrected reference score $\tilde{s}_{\mathrm{ref}}$ is used throughout to maintain text alignment.

\begin{algorithm}[H]
\caption{\textsc{Didr}: Diff-Instruct with Diffused Reward (full pseudocode)}
\label{alg:didr_detail}
\SetAlgoLined
\SetKwInOut{Input}{Input}
\small
\Input{
prompt dataset $\mathcal{C}$; generator $g_\theta$; prior $p_z$;
reward model $r(\cdot, c)$; CFG scale $\cfg$;
reference diffusion $s_{\mathrm{ref}}$; Teaching Assistant (TA) $s_\psi$;
posterior samples $K$; temperature $\tau$; denoising steps $S$;
TA update rounds $K_{\mathrm{TA}}$; weights $w(t), \lambda(t)$.
}

\While{\textnormal{not converged}}{
\textbf{Stage I: Update Teaching Assistant (TA)}\;
\For{$k=1$ \KwTo $K_{\mathrm{TA}}$}{
Sample $c \sim \mathcal{C}, z \sim p_z, t \sim \pi(t)$\;
Generate $x_0 = g_\theta(z, c)$, diffuse $x_t \sim p_t(x_t|x_0)$\;
Update $\psi$ by minimizing DSM loss: $\mathcal{L}(\psi) = \lambda(t)\,\| s_\psi(x_t, t, c) - \nabla_{x_t} \log p_t(x_t|x_0) \|_2^2$\;
}

\BlankLine
\textbf{Stage II: Update Generator}\;
Sample $c \sim \mathcal{C}, z \sim p_z, t \sim \pi(t)$\;
Generate $x_0 = g_\theta(z, c)$, diffuse $x_t \sim p_t(x_t|x_0)$\;
Compute CFG score: $\tilde{s}_{\mathrm{ref}} = s_{\mathrm{ref}}(x_t,t,\emptyset) + \cfg\left[s_{\mathrm{ref}}(x_t,t,c) - s_{\mathrm{ref}}(x_t,t,\emptyset)\right]$\;
\BlankLine
\textit{Compute DRP estimate $s_r$ via posterior sampling:}\;
\For{$k=1$ \KwTo $K$}{
Run $S$-step differentiable denoising from $x_t$: $\hat{x}_0^{(k)} \leftarrow \mathrm{Denoise}_S(x_t, s_{\mathrm{ref}}, c)$\;
Compute $r^{(k)} = r(\hat{x}_0^{(k)}, c)$ and pathwise gradient $\nabla_{x_t} r^{(k)}$\;
}
Compute softmax weights: $\omega^{(k)} = \dfrac{\exp(r^{(k)}/\tau)}{\sum_{k'=1}^{K}\exp(r^{(k')}/\tau)}$\;
Compute $s_r = \dfrac{1}{\tau}\displaystyle\sum_{k=1}^{K} \omega^{(k)}\, \nabla_{x_t} r^{(k)}$\;
\BlankLine
Update $\theta$ via Eq.~\eqref{eq:grad_practical}: $\operatorname{Grad}(\theta) = w(t)\,\bigl(s_\psi(x_t,t,c) - (\tilde{s}_{\mathrm{ref}} + s_r)\bigr)\tfrac{\partial x_t}{\partial\theta}$\;
}
\Return{$\theta, \psi$.}
\end{algorithm}

\section{Implementation Details}
\label{app:implementation}

\subsection{SDXL Experiments}

\paragraph{Generator Architecture.}
SDXL~\citep{podell2023sdxl} is a latent diffusion model built on a UNet backbone with approximately 2.6B parameters. It employs a dual text-encoder design: a CLIP ViT-L encoder producing $d_1=768$-dimensional embeddings $c_{\mathrm{CLIP}}\in\mathbb{R}^{768}$ and an OpenCLIP ViT-bigG encoder producing $d_2=1280$-dimensional embeddings $c_{\mathrm{bigG}}\in\mathbb{R}^{1280}$; these are concatenated to form the conditioning vector
\begin{equation}
c = [c_{\mathrm{CLIP}};\,c_{\mathrm{bigG}}] \in \mathbb{R}^{2048}.
\end{equation}
Images $x\in\mathbb{R}^{3\times1024\times1024}$ are encoded by the SDXL VAE with 8$\times$ spatial downsampling into a 4-channel latent $z=\mathcal{E}(x)\in\mathbb{R}^{4\times128\times128}$, and decoded back via $\hat{x}=\mathcal{D}(z)$. The base UNet is trained natively at $1024\times1024$ with micro-conditioning on original and target image sizes to suppress training-resolution artifacts. SDXL supports an optional two-stage pipeline (base UNet for high-noise denoising, refiner UNet for low-noise enhancement); we use only the base UNet as the reference model $s_{\mathrm{ref}}$, which is queried with classifier-free guidance:
\begin{equation}
\tilde{s}_{\mathrm{ref}}(x_t,t,c) = s_{\mathrm{ref}}(x_t,t,\emptyset) + \alpha_{\mathrm{cfg}}\bigl(s_{\mathrm{ref}}(x_t,t,c)-s_{\mathrm{ref}}(x_t,t,\emptyset)\bigr).
\end{equation}

The forward VP-SDE corrupts data as
\begin{equation}
q_t(x_t\mid x_0) = \mathcal{N}\!\bigl(\alpha_t\,x_0,\,\sigma_t^2 I\bigr), \quad \alpha_t^2+\sigma_t^2=1,
\end{equation}
where $(\alpha_t,\sigma_t)$ follows the DDPM cosine schedule. The one-step generator $g_\theta$ shares the same UNet architecture and latent space, mapping a noise vector $z\sim\mathcal{N}(0,I)$ directly to a clean latent,
\begin{equation}
\hat{z}_0 = g_\theta(z,c) \in \mathbb{R}^{4\times128\times128},
\end{equation}
which is decoded to a $1024\times1024$ image by the frozen SDXL VAE. We initialize $g_\theta$ from the DMD2-SDXL-1step checkpoint~\citep{yin2024improved}. Both $s_{\mathrm{ref}}$ and the Teaching Assistant (TA) $s_\psi$ are initialized from the pretrained 50-step SDXL base model. We set $\sigma_{\mathrm{init}}=2.5$, the noise magnitude at which the generator input $z$ is injected into the forward diffusion during training, following Diff-Instruct~\citep{Luo2023DiffInstructAU} and SiD~\citep{zhou2024score}.

\paragraph{Training Setup.}
All models are trained on text prompts from LAION-Aesthetic-6.25+~\citep{zhou2024long} with no image data. We use PickScore~\citep{pickscore} as the reward model. Both $g_\theta$ and $s_\psi$ are optimized with Adam~\citep{kingma2014adam} ($\beta_1=0.0$, $\beta_2=0.999$), learning rate $5\times10^{-6}$, per-GPU batch size 1, and effective batch size 512 via gradient accumulation. The CFG scale for all reference model evaluations is $\alpha_{\mathrm{cfg}}=7.5$. The generator loss time weighting is $w(t)=1$ for all $t$~\citep{Luo2023DiffInstructAU}, and the TA DSM weighting $\lambda(t)$ follows the default SDXL training schedule. DRP hyperparameters are $K=4$, $S=4$, $\tau=0.01$. Training is conducted on 8 H100 GPUs (\textsc{Didr}: 48 hours; \textsc{Didr}$_{\mathrm{longer}}$: 72 hours).

\paragraph{Ablation Experiments.}
All ablation variants (Tables~\ref{tab:ablation_ks} and~\ref{tab:ablation_tau}) share the same setup as standard \textsc{Didr}: same initialization, optimizer, and training duration (48 hours on 8 H100 GPUs). Only the hyperparameter under study is varied; all others are fixed at the default values ($K=4$, $S=4$, $\tau=0.01$).

\subsection{Z-Image Experiments}

\paragraph{Generator Architecture.}
Z-Image~\citep{zimage} is a 6.15B-parameter text-to-image model built on a Scalable Single-Stream Diffusion Transformer (S3-DiT) with 30 transformer layers (hidden dimension 3840, 32 attention heads, FFN intermediate dimension 10240). Unlike dual-stream architectures, S3-DiT concatenates text tokens $h_{\mathrm{text}}$ (from a Qwen3-4B encoder), visual semantic tokens $h_{\mathrm{vis}}$ (from SigLIP features), and VAE image tokens $h_{\mathrm{img}}$ (encoded by the Flux VAE) into a single unified sequence
\begin{equation}
h = [h_{\mathrm{text}};\,h_{\mathrm{vis}};\,h_{\mathrm{img}}],
\end{equation}
which is processed jointly by all transformer layers, with 3D Unified RoPE for positional encoding and RMSNorm / QK-Norm / Sandwich-Norm for stability. In contrast to SDXL's VP-SDE, Z-Image adopts a rectified flow (flow matching) parameterization~\citep{liu2022flow}. The forward process interpolates linearly between data $x_0$ and noise $\epsilon\sim\mathcal{N}(0,I)$:
\begin{equation}
x_t = (1-t)\,x_0 + t\,\epsilon, \quad t\in[0,1],
\end{equation}
and the network is trained to predict the velocity field $v^*(x_t,t,c) = \epsilon - x_0$. Inference integrates the ODE
\begin{equation}
\frac{\diff x_t}{\diff t} = v_\phi(x_t,t,c)
\end{equation}
via Euler steps $x_{t+\Delta t} = x_t + \Delta t\, v_\phi(x_t,t,c)$, with the 50-step base model serving as $s_{\mathrm{ref}}$ (expressed in score form as $s_{\mathrm{ref}} = -v_\phi/\sigma_t$ under the flow parameterization). The base model supports arbitrary-resolution generation up to $1\text{k}\text{--}1.5\text{k}$ pixels via dynamic batch sizing.

The one-step generator $g_\theta$ maps noise $z\sim\mathcal{N}(0,I)$ directly to a clean image,
\begin{equation}
\hat{x}_0 = g_\theta(z,c),
\end{equation}
and is initialized from Z-Image-Turbo~\citep{zimage}, a consistency-distilled variant capable of high-quality generation in 8 steps. Initializing from Z-Image-Turbo rather than the raw 50-step base model provides a stronger single-step starting point, as it has already undergone consistency distillation. The frozen reference $s_{\mathrm{ref}}$ (equivalently $v_{\mathrm{ref}}$) and TA initialization are taken from the full 50-step Z-Image base model.

\paragraph{Training Setup.}
We follow the same data and reward setup as the SDXL experiments (LAION-Aesthetic-6.25+ prompts, PickScore reward). The optimizer is Adam ($\beta_1=0.0$, $\beta_2=0.999$) with a reduced learning rate of $1\times10^{-6}$ to accommodate the larger model scale. We use per-GPU batch size 1, effective batch size 256, and CFG scale $\alpha_{\mathrm{cfg}}=7.5$. DRP hyperparameters are $K=2$, $S=4$, $\tau=0.01$. Training is conducted on 8 H100 GPUs (96 hours).

\paragraph{DRP Posterior Sampling for Flow Matching.}
As derived in Appendix~\ref{app:proof_lem4}, diversity across the $K$ posterior chains is induced by randomized discretization: each chain draws $S$ timesteps uniformly at random from $(0,t_S]$ and runs the deterministic Euler ODE on its own schedule, yielding $K$ diverse approximate denoising endpoints.

\begin{table*}[!h]
\centering
\small
\setlength{\tabcolsep}{5pt}
\caption{Hyperparameter summary for all \textsc{Didr} model variants.}
\label{tab:hyperparams}
\begin{tabular}{lccc}
\toprule
\textbf{Hyperparameter} & \textbf{SDXL \textsc{Didr}} & \textbf{SDXL \textsc{Didr}$_{\text{longer}}$} & \textbf{zimage-\textsc{Didr}} \\
\midrule
Generator init          & DMD2-SDXL-1step    & DMD2-SDXL-1step    & Z-Image-Turbo \\
Reference diffusion     & SDXL (VP)          & SDXL (VP)          & Z-Image (flow) \\
Reward model            & PickScore          & PickScore          & PickScore \\
CFG scale $\alpha_{\mathrm{cfg}}$ & 7.5     & 7.5                & 7.5 \\
Posterior samples $K$   & 4                  & 4                  & 2 \\
Denoising steps $S$     & 4                  & 4                  & 4 \\
Temperature $\tau$      & 0.01               & 0.01               & 0.01 \\
Learning rate           & $5\times10^{-6}$   & $5\times10^{-6}$   & $1\times10^{-6}$ \\
Effective batch size    & 512                & 512                & 256 \\
Training cost           & 48 hours               & 72 hours               & 96 hours \\
\bottomrule
\end{tabular}
\end{table*}

\section{Image Generation Prompts}
\label{app:prompts}

The following lists the text prompts used to generate all qualitative figures in the paper.

\subsection*{Figure~\ref{fig:teaser} --- Teaser}
\begin{enumerate}
  \item A tranquil dormant mountain dusted with fresh snow under a clear pale blue daytime sky, calm still air, soft diffuse light, muted cool whites and blues, serene and peaceful alpine landscape
  \item Close-up macro of a human eye with a vivid blue iris, glittering sparkles and cosmic galaxy-like reflections, extreme detail, macro photography
  \item Portrait of a young African woman wearing a blue and yellow headwrap and a pearl earring, classical painting style, warm studio lighting, elegant and dignified
  \item A majestic tall sailing ship navigating through a glowing cosmic nebula, teal and cyan hues, dramatic fantasy scene, epic scale
  \item A vivid explosion of colorful paint splashing in mid-air, rainbow hues of orange, yellow, magenta, blue and purple, high-speed photography, dark background
  \item A tabby cat leaping through the air in a frozen action shot, photorealistic, natural daylight background
  \item An assorted sushi platter with rolls and nigiri, glowing neon blue ring lighting, futuristic and vibrant food photography
  \item A lush bouquet of deep red roses in a round green ceramic vase, soft natural window light, elegant floral still life
  \item A woman's portrait blending photorealism with blue watercolor ink splashes, artistic double-exposure style, ethereal and dreamy
\end{enumerate}

\subsection*{Figure~\ref{fig:qualitative} --- Qualitative Comparison}
SDXL comparison:
\begin{enumerate}
  \item Dramatic close-up portrait of a rugged elderly man with a weathered face, white beard, dark flat cap, overcast sea in the background
  \item A baby elephant in rain boots jumping in puddles under a rainbow, cheerful children's book illustration.
  \item An oil painting of a polar bear mother and cub drifting on an ice floe at Arctic dusk, warm amber sky.
\end{enumerate}

Z-Image comparison:
\begin{enumerate}
  \item A beautiful young woman in a white linen dress, Mediterranean coastal town background, warm golden hour light, photorealistic portrait
  \item A majestic white stallion galloping through crashing ocean waves at sunrise, cinematic photography.
  \item A young wizard girl casting a glowing spell in a floating sky library, anime key visual style.
\end{enumerate}

\subsection*{Figure~\ref{fig:tau_ablation} --- Temperature Ablation}
\begin{enumerate}
  \item Aurora borealis over a calm reflective lake surrounded by snow-covered pine trees, winter night
  \item Volcanic eruption with lava flows meeting snow and ice, dramatic fire and steam, epic landscape
  \item White arctic fox standing in snow, winter landscape, photorealistic wildlife photography
  \item Desert campsite at night with glowing orange tents and campfire, Milky Way visible overhead
\end{enumerate}

\subsection*{Figure~\ref{fig:onestep_vs_multistep} --- Multi-step Comparison}
\begin{enumerate}
  \item Japanese ukiyo-e woodblock print of Mount Fuji with cherry blossom trees and a lone figure on a wooden bridge
  \item Cute anthropomorphic fox in a cozy autumn caf\'{e}, illustrated anime art style, fallen leaves, warm tones
  \item Aerial drone view of a tropical volcanic island with a turquoise crater lake, surrounded by ocean
  \item Portrait of a Rajasthani woman with traditional Indian jewelry and a colorful headscarf, photorealistic
\end{enumerate}

\subsection*{Figure~\ref{fig:failure} --- Failure Cases}
\begin{enumerate}
  \item A woman placing a tray of food into a kitchen oven
  \item A large crowd of people standing in a queue outdoors
  \item A round wooden dining table in a bright room
  \item A person cutting raw meat on a kitchen counter surrounded by fresh vegetables
\end{enumerate}